\def\eqref#1{equation~\ref{#1}}
\def\1{\bm{1}}
\def\va{{\bm{a}}}
\def\vc{{\bm{c}}}
\def\vg{{\bm{g}}}
\def\vh{{\bm{h}}}
\def\vw{{\bm{w}}}
\def\vx{{\bm{x}}}
\def\vz{{\bm{z}}}
\def\mB{{\bm{B}}}
\def\mI{{\bm{I}}}
\def\mW{{\bm{W}}}
\DeclareMathAlphabet{\mathsfit}{\encodingdefault}{\sfdefault}{m}{sl}
\SetMathAlphabet{\mathsfit}{bold}{\encodingdefault}{\sfdefault}{bx}{n}
\def\gB{{\mathcal{B}}}
\def\gD{{\mathcal{D}}}
\def\gL{{\mathcal{L}}}
\def\gN{{\mathcal{N}}}
\def\sD{{\mathbb{D}}}
\def\sR{{\mathbb{R}}}
\newcommand{\E}{\mathbb{E}}
\newcommand{\R}{\mathbb{R}}
\newcommand{\reg}{\lambda}
\newcommand{\normltwo}{L^2}
\newcolumntype{P}[1]{>{\centering\arraybackslash}p{#1}}
\definecolor{clemson-orange}{RGB}{234,106,32}
\definecolor{highlight-orange}{RGB}{255,150,150}
\definecolor{chicago-maroon}{RGB}{128,0,0}
\definecolor{cincinnati-red}{RGB}{190,0,0}
\definecolor{soft-cyan}{RGB}{68,85,90}
\definecolor{firebrick}{RGB}{178,34,34}
\definecolor{crimson}{RGB}{220,20,60}
\definecolor{cerrulean}{rgb}{0.165,0.322,0.745}
\definecolor{jaam}{rgb}{0.45,0.0,0.45}
\declaretheoremstyle[
    headfont=\bfseries, 
    %notebraces={[}{]},
    bodyfont=\normalfont\itshape, spaceabove=10pt,
    spacebelow=10pt]{mystyle}
\theoremstyle{mystyle}
\newtheorem{theorem}{Theorem}[section]
\newtheorem{lemma}[theorem]{Lemma}
\newtheorem{definition}{Definition}
\newtheorem*{remark}{Remark}
\numberwithin{equation}{section}
\newif\ifsolutions \solutionstrue
\def\final{0}
\newcommand{\reviewer}[3]{
  \expandafter\newcommand\csname #1\endcsname[1]{
    \ifthenelse{\equal{\final}{1}} {
      \textcolor{#3}{}
    } {
      \textcolor{#3}{\begin{center} \textbf{#2} ##1 \end{center}}
    }
  }
}
\newcommand{\relu}{\mathop{\mathrm{ReLU}}}
\renewcommand{\c}{{\vc}}
\newcommand{\h}{{\vh}}
\newcommand{\W}{\textrm{W}}
\newcommand{\cN}{{\bf \mathcal{N}}}
\def\1{\bm{1}}
\def\va{{\bm{a}}}
\def\vc{{\bm{c}}}
\def\vg{{\bm{g}}}
\def\vh{{\bm{h}}}
\def\vw{{\bm{w}}}
\def\vx{{\bm{x}}}
\def\vz{{\bm{z}}}
\def\mB{{\bm{B}}}
\def\mI{{\bm{I}}}
\def\mW{{\bm{W}}}
\DeclareMathAlphabet{\mathsfit}{\encodingdefault}{\sfdefault}{m}{sl}
\SetMathAlphabet{\mathsfit}{bold}{\encodingdefault}{\sfdefault}{bx}{n}
\def\gB{{\mathcal{B}}}
\def\gD{{\mathcal{D}}}
\def\gL{{\mathcal{L}}}
\def\gN{{\mathcal{N}}}
\def\sD{{\mathbb{D}}}
\def\sR{{\mathbb{R}}}
\title{Global Convergence of SGD For Logistic Loss on Two Layer Neural Nets}
\author{\name Pulkit Gopalani \email gopalani@umich.edu \\
      \addr Computer Science and Engineering\\
      University of Michigan, Ann Arbor
\AND      
\name Samyak Jha \email samyakjha@iitb.ac.in \\
      \addr Department of Mathematics\\
      Indian Institute of Technology, Bombay
\AND      
\name Anirbit Mukherjee\email anirbit.mukherjee@manchester.ac.uk \\
      \addr Department of Computer Science\\
      The University of Manchester
}
\begin{document} 

\maketitle 

\begin{abstract}%
In this note, we demonstrate a first-of-its-kind provable convergence of SGD to the global minima of appropriately regularized logistic empirical risk of depth $2$ nets -- for arbitrary data with any number of gates with adequately smooth and bounded activations, like sigmoid and tanh, and for a class of distributions from which the initial weight is sampled. We also prove an exponentially fast convergence rate for continuous time SGD that also applies to smooth unbounded activations like SoftPlus. Our key idea is to show that the logistic loss function on any size neural net can be Frobenius norm regularized by a width-independent parameter such that the regularized loss is a ``Villani function'' -- and thus be able to build on recent progress with analyzing SGD on such objectives.
\smallskip

%\noindent \textbf{Keywords.} Stochastic Gradient Descent (SGD), Neural Networks, Villani Conditions
\end{abstract}%

\section{Introduction}
Modern developments in artificial intelligence have been significantly been driven by the rise of deep-learning. The highly innovative engineers who have ushered in this A.I. revolution have developed a vast array of heuristics that work to get the neural net to perform ``human like'' tasks.  Most such successes, can  mathematically be seen to be solving the function optimization/``risk minimization'' question, $\min_{n \in \gN} \E_{\vz \in \gD} [ {\rm \ell} (n,\vz)]$ where members of $\gN$ are continuous functions representable by neural nets and $\ell : \gN \times {\rm Support} (\gD) \rightarrow  [0,\infty)$ is called a ``loss function'' and the algorithm only has sample access to the distribution $\gD$. The successful neural experiments can be seen as suggesting that there are many available choices of $\ell, ~\gN ~\& ~\gD$ for which highly accurate solutions to this seemingly extremely difficult question can be easily found. This is a profound mathematical mystery of our times  

This work is about developing our understanding of some of the most ubiquitous methods of training nets. In particular, we shed light on how regularization can aid the analysis and help prove convergence to global minima for stochastic gradient methods for neural nets in hitherto unexplored and realistic parameter regimes.  

% UNRESOLVED
% \url{https://towardsdatascience.com/a-concise-history-of-neural-networks-2070655d3fec} 

In the last few years, there has been a surge in the literature on provable training of various kinds of neural nets in certain regimes of their widths or depths, or for very specifically structured data, like noisily realizable labels. Motivated by the abundance of experimental studies it has often been surmised that Stochastic Gradient Descent (SGD) on neural net losses -- with proper initialization and learning rate -- converges to a low--complexity solution, one that generalizes --  when it exists \citep{cbmm}. 
% \url{https://cbmm.mit.edu/sites/default/files/publications/CBMM-Memo-067-v4.pdf}

But, to the best of our knowledge a convergence result for any stochastic training algorithm applied to the logistic loss for even depth $2$ nets (one layer of activations with any kind of non--linearity), without either an assumption on the width or the data, has remained elusive so far.  We recall that this is the most common way to train classifiers facing binary class labelled data. 

In this work, we not only take a step towards addressing the above question in the theory of neural networks but we also do so while keeping to a standard algorithm, the Stochastic Gradient Descent (SGD). In light of the above, our key message can be summarily stated as follows,
%In the following two paragraphs and the next subsection we give a quick review of the three streams of literature on provable neural net training, that of NTK, mean--field approximation and various attempts at parametric width -- and we shall explain how our contribution fills a gap in the results so far.  

%\subsection{Informal Statement of Our Main Result} 
\begin{theorem}[Informal Statement of Lemma \ref{thm:sgd-sig-bce}] If the initial weights are sampled from an appropriate class of distributions, then for nets with a single layer of sigmoid or tanh gates -- for arbitrary data and size of the net -- SGD on appropriately regularized logistic loss, while using constant steps of size ${\mathcal O}(\epsilon)$, will converge in ${\mathcal O}(\frac{1}{\epsilon})$ steps to weights at which the expected regularized loss would be $\epsilon$--close to its global minimum.
\end{theorem}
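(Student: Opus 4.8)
The plan is to reduce the convergence claim to a single structural property of the regularized objective and then invoke an existing quantitative convergence theorem for SGD on so-called Villani functions. First I would fix the architecture as a depth-$2$ net $\net(\vw, \vx) = \sum_{j} a_j \sigma(\ip{\vw_j}{\vx})$ with $\sigma$ a sigmoid or tanh gate, and write the empirical logistic risk over the fixed data $\gD$ as $L(\vw) = \frac{1}{|\gD|}\sum_i \log\!\big(1 + \exp(-y_i \net(\vw, \vx_i))\big)$. I then define the Frobenius-regularized loss $L_\reg(\vw) = L(\vw) + \frac{\reg}{2}\norm{\vw}^2$, the penalty taken over all trainable weights. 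The entire burden of the proof is to verify that $L_\reg$ is a Villani function: it is $C^\infty$, bounded below, confining so that $e^{-L_\reg}$ is integrable, and — the crucial ingredient — it satisfies the growth condition of the form $\liminf_{\norm{\vw}\to\infty}\!\left(\tfrac{1}{2}\norm{\nabla L_\reg(\vw)}^2 - \Delta L_\reg(\vw)\right) = +\infty$, which guarantees a spectral gap (a Poincar\'e inequality) for the associated Gibbs measure and hence puts us in the regime where the cited SGD analysis applies.

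Smoothness and boundedness below are immediate, since sigmoid and tanh are $C^\infty$ with derivatives of every order bounded, the softplus $t \mapsto \log(1+e^{-t})$ is smooth, and both $L \ge 0$ and $\frac{\reg}{2}\norm{\vw}^2 \ge 0$. The core computation is the growth condition, and here I would exploit that bounded activations with bounded derivatives, together with fixed data (so each $\norm{\vx_i}$ and $|y_i|$ is bounded), force every first- and second-order partial derivative of $L$ to be uniformly bounded in $\vw$ by a constant depending only on the data and the net, not on the point $\vw$. Concretely, $\nabla L_\reg = \nabla L + \reg\, \vw$ with $\nabla L$ bounded, so $\norm{\nabla L_\reg}^2 = \reg^2 \norm{\vw}^2 + O(\norm{\vw})$ grows quadratically, whereas $\Delta L_\reg = \Delta L + \reg\, d$ stays bounded because $\Delta L$ is bounded. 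The quadratic term therefore dominates and the growth condition holds for every $\reg > 0$.

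The qualitative point that makes the result hold for \emph{arbitrary} width is that, although the constants bounding $\norm{\nabla L}$ and $|\Delta L|$ do scale with the number of gates, the penalty $\reg^2\norm{\vw}^2$ runs over all coordinates, so for any fixed $\reg$ the regularizer still dominates at infinity no matter how large the net is. Thus $\reg$ may be chosen independent of the width — exactly the ``width-independent parameter'' promised in the abstract; widening the net only shifts where the limit takes over, not whether it is $+\infty$. With $L_\reg$ certified as a Villani function, I would then feed it into the quantitative SGD-on-Villani-functions theorem we build on: sampling the initialization from the admissible class of distributions and running constant-step SGD with step size $\Theta(\epsilon)$ yields, after $O(1/\epsilon)$ steps, iterates whose expected regularized loss is within $\epsilon$ of $\min_{\vw} L_\reg$.

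The main obstacle I anticipate is the growth condition: verifying \emph{uniformly} in $\vw$ that the second-order contributions to $\Delta L$ stay bounded (which is precisely what fails for unbounded gates such as ReLU, and is more delicate even for softplus in discrete time), and carefully tracking how the data- and width-dependent constants enter so as to confirm that a single width-independent $\reg$ suffices. The continuous-time (Langevin) claim for softplus would follow from the same Villani structure using only gradient growth and the resulting Poincar\'e or log-Sobolev inequality, which upgrades the $O(1/\epsilon)$ rate to exponential convergence.
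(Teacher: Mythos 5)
Your proposal is correct and, at the top level, takes the same route as the paper: certify that the Frobenius-regularized empirical logistic loss is a Villani function, then invoke the SGD convergence theorem of \citet{weijie_sde} with step size $\Theta(\epsilon)$ over $O(1/\epsilon)$ steps. Where you genuinely diverge is in how the growth condition is verified, and your version is tighter. The paper (Appendix \ref{sec:villani_condition}) never uses the uniform bound $\abs{\ell'(z)}=\frac{1}{1+e^{z}}\le 1$; it instead bounds $\frac{1}{1+e^{y_i f_i(\mW)}}\le\frac{1}{2}+\frac{\abs{f_i(\mW)}}{4}$ (Lemma \ref{eq:exp_ineq}) and then bounds $\abs{f_i(\mW)}$ linearly in $\norm{\mW}_F$ via the Lipschitzness of $\sigma$. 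Consequently both the cross term in the lower bound on $\norm{\nabla\tilde{L}}^2$ and the upper bound on $\Delta\tilde{L}$ pick up terms growing with $\norm{\mW}_F$, and positivity of the resulting leading coefficient $\reg^2-\reg\,\frac{\norm{\va}_2^2 M_D B_x^2 L}{2}$ is exactly what forces the threshold $\reg>\reg_c$ of Lemma \ref{def:lambdavillani-bce}. Your observation --- that for the logistic loss composed with gates having bounded $\sigma'$ and $\sigma''$, every first- and second-order partial of the \emph{unregularized} risk is bounded uniformly in $\mW$ (using only $\abs{\ell'}\le 1$ and $\abs{\ell''}\le\frac{1}{4}$) --- makes the quadratic term $\reg^2\norm{\mW}_F^2$ dominate a bounded Laplacian, so the Villani condition holds for \emph{every} $\reg>0$. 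That conclusion is strictly stronger than the paper's (no threshold regularization is needed at all), and since your argument never actually uses boundedness of $\sigma$ itself, it would also extend the discrete-time guarantee to SoftPlus, which the paper only handles through the continuous-time SDE.

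Two caveats before your outline is a complete proof of the statement. First, your claim that the ``entire burden'' is the Villani property is not quite right: the discrete-time result of \citet{weijie_sde} (their Theorem 3, used to prove Theorem \ref{thm:error_bound}) additionally requires $\tilde{L}$ to be gradient-Lipschitz, since the admissible step size is $\min\left(\frac{1}{\mathrm{gLip}(\tilde{L})},\, O(\epsilon)\right)$; the paper devotes Appendix \ref{sec:smoothness} to bounding this constant. Your uniform bound on the second partials already gives $\mathrm{gLip}$ bounded by a $\mW$-independent constant plus $\reg$, so the fix is immediate, but it must be stated. Second, the ``appropriate class of distributions'' carries quantitative content that your proposal leaves implicit: beyond $\rho_{\rm initial}\in \normltwo(1/\mu_{s^*})$, the paper requires $\norm{\rho_{\rm initial}-\mu_{s^*}}_{\mu_{s^*}^{-1}}\le\frac{\epsilon}{2\, C(s^*,\tilde{L})}\, e^{\reg_{s^*} T}$, so that the exponentially decaying initialization-dependent term in Theorem \ref{thm:error_bound} is at most $\epsilon/2$ after $k=T/s^*$ steps; without some condition of this type, the error bound retains a term that need not fall below $\epsilon$ within $O(1/\epsilon)$ steps.
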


We note that the threshold amount of regularization needed in the above {\em would be independent of the width of the nets}. Further, this threshold would be shown to scale s.t it can either naturally turn out to be proportionately small if the norms of the training data are small or can be made arbitrarily small by choosing outer layer weights to be small. Our above result is made possible by the crucial observation informally stated in the following lemma - which infact holds for more general nets than what is encompassed by the above theorem,

\begin{lemma}
It is possible to add a constant amount of Frobenius norm regularization on the weights, to the logistic loss on depth-$2$ nets with activations like SoftPlus, sigmoid and tanh gates s.t with no assumptions on the data or the size of the net, the regularized loss would be a Villani function. 
\end{lemma}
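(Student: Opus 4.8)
The plan is to fix the depth-$2$ architecture $\net(\vtheta,\vx) = \sum_i a_i\,\sigma(\langle\vw_i,\vx\rangle)$ with a trainable first layer $\vtheta\in\R^p$ (whose dimension $p$ grows with the width), a fixed outer layer $\va$, and $\sigma\in\{\sigmoid,\tanh,\softplus\}$, and to write the regularized objective as $F(\vtheta) = L(\vtheta) + \tfrac{\reg}{2}\|\vtheta\|^2$, where $L(\vtheta) = \E_{(\vx,y)\sim\gD}\!\left[\ell\!\left(y\,\net(\vtheta,\vx)\right)\right]$ is the logistic risk and $\ell(u)=\log(1+e^{-u})$. I would verify the defining properties of a Villani function directly: that $F$ is smooth, bounded below, satisfies $e^{-F}\in L^1(\R^p)$, and obeys $\|\nabla F(\vtheta)\|^2 - \Delta F(\vtheta)\to+\infty$ as $\|\vtheta\|\to\infty$. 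The first three are immediate from the quadratic penalty; the last is the crux.

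Smoothness holds because $F$ is a composition of the $C^\infty$ maps $\sigma$, $\ell$ and the polynomial penalty. For the lower bound and integrability, note $\ell\ge 0$, so $F(\vtheta)\ge\tfrac{\reg}{2}\|\vtheta\|^2\ge 0$; in particular $F$ is coercive and $e^{-F(\vtheta)}\le e^{-\frac{\reg}{2}\|\vtheta\|^2}$, which is integrable on $\R^p$. It is worth stressing that both properties \emph{fail without the penalty}: the logistic risk is bounded for sigmoid/tanh and grows at most linearly for SoftPlus, so $e^{-L}$ is never integrable and the associated Gibbs measure does not exist. The Frobenius penalty is exactly what restores confinement and normalizability.

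For the decisive fourth condition I would expand, using $\nabla F=\nabla L+\reg\,\vtheta$ and $\Delta F=\Delta L+\reg\, p$,
\begin{equation}
\|\nabla F\|^2-\Delta F \;=\; \|\nabla L\|^2 + 2\reg\,\langle\nabla L,\vtheta\rangle + \reg^2\|\vtheta\|^2 - \Delta L - \reg\, p .
\end{equation}
The key step is to show that $\nabla L$ and $\Delta L$ are \emph{uniformly bounded} over all of $\R^p$. A direct computation gives $\nabla_{\vw_i}\ell(y\net)=\ell'(y\net)\,y\,a_i\,\sigma'(\langle\vw_i,\vx\rangle)\,\vx$, and since $|\ell'|\le 1$ and $|\sigma'|$ is bounded for each admissible activation, $\|\nabla L\|\le G$ for a constant $G$ depending only on $\va$ and the data norms. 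Likewise the diagonal Hessian blocks involve only $\ell''$ (with $|\ell''|\le\tfrac14$) and $\sigma''$ (bounded), so $|\Delta L|\le H$ uniformly. Substituting these bounds,
\begin{equation}
\|\nabla F\|^2-\Delta F \;\ge\; \reg^2\|\vtheta\|^2 - 2\reg\, G\,\|\vtheta\| - H - \reg\, p,
\end{equation}
whose right-hand side tends to $+\infty$ as $\|\vtheta\|\to\infty$ for every $\reg>0$, completing the verification.

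The main obstacle is precisely the uniform control of $\nabla L$ and $\Delta L$, and this is where the choice of \emph{logistic} loss is essential: because $\ell'$ and $\ell''$ are globally bounded, the unbounded pre-activations of SoftPlus (and the resulting unbounded $\net$) do no harm, whereas for a loss like the square loss $\ell'$ would scale with the residual and this step would break down. The bounded first and second derivatives of sigmoid, tanh and SoftPlus then close the estimate. Finally, since the quadratic term dominates for arbitrarily small $\reg>0$, a single width-independent constant suffices; the constants $G$ and $H$ scale with the data norms and with $\|\va\|$, which is what lets the usable regularization threshold be made small by shrinking the outer-layer weights or when the data norms are small.
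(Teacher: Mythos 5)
Your proof is correct, and it shares the paper's overall skeleton: smoothness, coercivity and integrability come for free from the quadratic penalty, and the real work is the fourth Villani condition, handled by lower-bounding $\norm{\nabla F}^2$ and upper-bounding $\Delta F$ via the derivative bounds on $\ell$ and $\sigma$ and the data norms. The decisive difference is the key estimate. The paper controls $\abs{\ell'(y_i f_i)} = \frac{1}{1+e^{y_i f_i}}$ by the linearly growing bound $\frac{1}{2} + \frac{\abs{f_i}}{4}$ (its Lemma~\ref{eq:exp_ineq}), and since $\abs{f_i} \le \norm{\va}_2(\norm{\vc}_2 + L B_x \norm{\mW}_F)$ this injects a negative term proportional to $\reg\norm{\mW}_F^2$ into the gradient lower bound and a term linear in $\norm{\mW}_F$ into the Laplacian upper bound; as a result the paper can only conclude the Villani property above a strictly positive threshold $\reg_c \propto M_D L B_x^2 \norm{\va}_2^2$. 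You instead use the global bound $\abs{\ell'}\le 1$, under which $\nabla L$ and $\Delta L$ of the \emph{unregularized} risk are uniformly bounded over all of weight space, so $\frac{1}{s}\norm{\nabla F}^2 - \Delta F \ge \frac{1}{s}\left(\reg^2\norm{\vtheta}^2 - 2\reg G \norm{\vtheta}\right) - H - \reg p$ diverges for \emph{every} $\reg>0$. Your more elementary estimate therefore proves a strictly stronger statement — no regularization threshold is needed for the Villani property — which is precisely the kind of "tighter analysis" the paper's conclusion calls for; the threshold $\reg_c$ in the paper is an artifact of its looser bound on $\ell'$, not a genuine obstruction for this lemma. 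Two small caveats. First, your motivational remark that $e^{-L}$ is "never integrable" without the penalty does not follow from "at most linear growth" alone (a function growing linearly in all directions has integrable exponential); the correct justification for SoftPlus is that there is a positive-measure cone of directions — e.g., all rows anti-correlated with every data point — on which the loss stays bounded. Second, the paper's Definition~\ref{def:villani} requires $\frac{1}{s}\norm{\nabla F}^2 - \Delta F \to +\infty$ for all $s>0$, whereas you verified the $s=1$ case; since your leading coefficient $\reg^2$ is positive, the argument extends verbatim, but this should be said. (Also, you phrase $L$ as a population risk; the paper's empirical risk is the special case of the empirical measure, and your constants require bounded data support exactly as the paper assumes.)
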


Since our result stated above does not require any assumptions on the data, or the neural net width, we posit that this significantly improves on previous work in this direction. To the best of our knowledge, similar convergence guarantees in the existing literature either require some minimum neural net width -- growing w.r.t. inverse accuracy and the training set size (NTK regime \citep{bach_lazy, du_provable}), infinite width (Mean Field regime \citep{chizat2018global,chizat2022, montanari_pnas}) or other assumptions on the data when the width is parametric (e.g. realizable data, \citep{rongge_2nn_1, rongge_2nn_2}). 

In contrast to all these, we show that with appropriate $\ell_2$ regularization, SGD on logistic loss on 2--layer sigmoid / tanh nets converges to the global infimum of the loss. Our critical observation towards this proof is that the above standard losses on 2--layer nets -- for a broad class of activation functions --- are a  ``Villani function''. Our proof get completed by leveraging the relevant results in \citet{weijie_sde}.

\paragraph{\textbf{Organization}} 

\quad In Section \ref{sec:rev} we shall give a literature review of existing proofs about guaranteed training of neural nets. In Section \ref{sec:mainthm} we present our primary results -- in particular Lemma \ref{thm:sgd-sig-bce} which uses special initialization of the weights to show the global convergence of SGD on regularized logistic loss with $\pm 1$ labelled data and and for gates like sigmoid and tanh. Additionally, in Theorem \ref{thm:softplus} we also point out that for our architecture, if using the SoftPlus activation, we can show that the underlying SDE  converges in expectation to the global minimizer in linear time. In Section \ref{sec:weijierev}, we give a brief overview of the methods in  \citet{weijie_sde} and further details are given in Appendix \ref{sec:weije_addn}. In Section \ref{sec:experiments} we discuss some experiments with synthetic data, which show that there exist nets trained on the loss function considered such that they have high binary classification accuracy near the critical value of the regularizer considered for the proof. Further experiments with MNIST demonstrating similar phenomenon are given in Appendix \ref{sec:MNIST_experiments}. We end in Section \ref{sec:conc} with a discussion of various open questions that our work motivates. In Appendices \ref{sec:villani_condition} to \ref{sec:constants} one can find the calculations needed in the main theorems' proofs.  

\section{Related Work}\label{sec:rev}
% du2018gradient,su2019learning,allen2019convergenceDNN,allen2019learning,du2018power,zou2018stochastic,zou2019improved,arora2019exact,arora2019harnessing,li2019enhanced,arora2019fine, bach_lazy,du_provable
Firstly, we note that in recent times major advances have been made about understanding the statistical properties of doing binary classification by neural nets. In \citet{zhou2023}, the authors consider $\{\pm 1\}$ labelled data distributed as a Gaussian Mixture Model and the labels satisfying a Tsybakov-type noise conditions with the noise exponent being $q$. The authors obtain that with probability $1-\delta$ of sampling $n$ data, the difference between the population risk of the empirical risk minimizer and the Bayes' optimal risk is upperbounded by $C_{q,d} \log(\frac{2}{\delta}) (\log n)^4 {(\frac{1}{n})^{\frac{q+1}{q+2}}}$, where $C_{q,d}$ is some constant depending on $q$ and data dimension $d$. To appreciate this, we note that earlier in \citet{shen22}, similar bounds for CNNs with logistic loss were obtained. However, unlike the result in \citet{shen22}, the excess risk bound obtained in \citet{zhou2023} for the hinge loss doesn't blow up with respect to the increasing smoothness of the minimizer of the risk over all measurable functions.

In the setting of finite--width neural nets trained on logistic loss for binary classification \citep{chatterji2021does}, it can be shown that if one has (a) small initial loss (${\rm poly}\left(\frac{1}{n}\right)$, where $n$ is the number of training data) and (b) `smoothly approximately ReLU activation function', then the loss converges at a rate of $O\left(\frac{1}{t}\right)$ over $t$ steps of gradient descent. But to ensure the smallness of the initial loss, this result needs to assume a large width which scales polylogarithmically with inverse of the confidence parameter. In that limited sense this can be seen to be belonging to the larger framework of proofs at asymptotically wide nets which we review as follows.

\paragraph{{\rm \bf Review of the NTK Approach To Provable Neural Training :}} One of the most popular parameter zones for theory of provable training of nets has been the so--called ``NTK'' (Neural Tangent Kernel) regime -- where the width is a high degree polynomial in the training set size and inverse accuracy (a somewhat {\it unrealistic} regime) and the net's last layer weights are scaled inversely with width as the width goes to infinity. \citep{du2018gradient,su2019learning,allen2019convergenceDNN,du2018power,allen2019learning,arora2019exact,li2019enhanced,arora2019fine, bach_lazy,du_provable}. The core insight in this line of work can be summarized as follows: for large enough width, SGD {\it with certain initializations} converges to a function that fits the data perfectly, with minimum norm in the RKHS defined by the neural tangent kernel -- which gets specified entirely by the initialization (which is such that the initial output is of order one). A key feature of this regime is that the net's matrices do not travel outside a constant radius ball around the starting point -- a property that is often not true for realistic neural net training scenarios.

In particular, for the case of depth $2$ nets -- with similarly smooth gates as we focus on -- in \citet{ali_subquadratic} global convergence of gradient descent was shown using number of gates scaling sub-quadratically in the number of data - which, to the best of our knowledge, is the smallest known width requirement for such a convergence in a classification setup.  On the other hand, for the special case of training depth $2$ nets with $\relu$ gates on cross-entropy loss for doing binary classification, in \citet{telgarsky_ji} it was shown that one needs to blow up the width only poly-logarithmically with target accuracy to get global convergence for SGD. 
%The result we present here can be seen as \cite{telgarsky_ji}.     

%\mynote{Cite Ji-Telgarsky above! \cite{telgarsky_ji}}

% montanari_pnas,montanari_2,congfang_meanfield,chizat2018global,chizat2022,tzen_raginsky,jacot_ntk,pmnguyen_meanfield,sirignano1,sirignano_lln,sirignano_clt,entropic_fictitious

\paragraph{{\rm \bf Review of the Mean-Field Approach To Provable Neural Net Training :}} In a separate direction of attempts towards provable training of neural nets, works like \citet{chizat2018global} showed that a Wasserstein gradient flow limit of the dynamics of discrete time algorithms on shallow nets, converges to a global optimizer -- if the convergence of the flow is assumed. We note that such an assumption is very non-trivial because the dynamics being analyzed in this setup is in infinite dimensions -- a space of probability measures on the parameters of the net. Similar kind of non--asymptotic convergence results in this so--called `mean--field regime' were also obtained.\citep{montanari_pnas,congfang_meanfield,chizat2018global,pmnguyen_meanfield,sirignano1,entropic_fictitious}. The key idea in the mean--field regime is to replace the original problem of neural training which is a non-convex optimization problem in finite dimensions by a convex optimization problem in infinite dimensions -- that of probability measures over the space of weights. The mean--field analysis necessarily require the probability measures (whose dynamics is being studied) to be absolutely--continuous and thus de facto it only applies to nets in the limit of them being infinitely wide. 

%In a recently obtained generalization of these insights to deep nets, \cite{congfang_meanfield} showed convergence of the mean--field dynamics for ResNets \cite{resnet}. 

%Thus we note, that in contrast to either of the above two families of results, for nets with a single layer of activations -- while assuming the same non-linearities as what the mean--field results use and while not assuming anything about the width of the net or the training data -- we show in Theorem \ref{thm:sgd-sig} (our key result), that SGD provably finds the global minima of certain appropriately regularized $\ell_2$ loss on such nets. 

We note that the results in the NTK regime hold without regularization while in many cases the mean--field results need it. \citep{montanari_pnas, chizat2022, tzen_raginsky}. 

In the next subsection we shall give a brief overview of some of the attempts that have been made to get provable deep-learning at parametric width.

%- and we shall point out how our aforementioned result fills an important gap among those works.  

\paragraph{{\rm \bf Need And Attempts To Go Beyond Large Width Limits of Nets}}\label{sec:beyondntk}  
The essential proximity of the NTK regime to kernel methods and it being less powerful than finite nets has been established from multiple points of view. \citep{allen2019can,wei2019regularization}. 

In \citet{weijie_elastic}, the authors had given a a very visibly poignant way to see that the NTK limit is not an accurate representation of a lot of the usual deep-learning scenarios. Their idea was to define a notion of ``local elasticity'' -- when doing a SGD update on the weights using a data say $\vx$, it measures the fractional change in the value of the net at a point $\vx'$ as compared to $\vx$. It's easy to see that this is a constant function for linear regression - as is what happens at the NTK limit (Theorem 2.1 \cite{lee2019wide}). But it has been shown in \citet{anirbit_elastic} that this local-elasticity function indeed has non-trivial time-dynamics (particularly during the early stages of training) when a moderately large neural net is trained on logistic loss.  

%In \cite{belkin_non_ntk} it was pointed out that the near-constancy of the tangent kernel might not happen for even very wide nets if there is an activation at the output layer -- but still linear time gradient descent convergence can be shown. A set of attempts have also been made to bridge the gap between real-world nets and the NTK paradigm by adding terms which are quadratic in weights to the linear predictor that NTK considers, \cite{belkin_quadratic, yubai_quadratic, Hanin2020Finite}

%On the other hand, recently in \cite{constant_labels}, asymptotic convergence of  SGD has been proven for $\ell_2-$loss on arbitrary $\relu$ architectures -- but for constant labels. Similar progress has also happened recently for G.D. in \cite{sourav_gd} and \cite{belkin_dnn}.

Specific to depth-2 nets -- as we consider here -- there is a stream of literature where analytical methods have been honed to this setup to get good convergence results without width restrictions - while making other structural assumptions about the data or the net. \citet{anima_tensor} was one of the earliest breakthroughs in this direction and for the restricted setting of realizable labels they could provably get arbitrarily close to the global minima. For non-realizable labels they could achieve the same while assuming a large width but in all cases they needed access to the score function of the data distribution which is a computationally hard quantity to know.  In a more recent development, \citet{aravindan_realizable} have improved over the above to include $\relu$ gates while being restricted to the setup of realizable data and its marginal distribution being Gaussian.

One of the first proofs of gradient based algorithms doing neural training for depth$-2$ nets appeared in \citet{prateek_realizable}. In \citet{rongge_2nn_1} convergence was proven for training depth-$2$ $\relu$ nets for data being sampled from a symmetric distribution and the training labels being generated using a `ground truth' neural net of the same architecture as being trained -- the so-called ``Teacher--Student'' setup. For similar distributional setups, some of the current authors had in \citet{our_own}  identified classes of  depth--$2$ $\relu$ nets where they could prove linear-time convergence of training -- and they also gave guarantees in the presence of a label poisoning attack.  The authors in \citet{rongge_2nn_2} consider another Teacher--Student setup of training depth $2$ nets with absolute value activations. In this work, authors can get convergence in poly$(d, \frac{1}{\epsilon})$ time, in a very restricted setup of assuming Gaussian data, initial loss being small enough, and the teacher neurons being norm bounded and `well--separated' (in angle magnitude).  \citet{eth_relu} get width independent  convergence bounds for Gradient Descent (GD) with ReLU nets, however at the significant cost of having the restrictions of being only an asymptotic guarantee and assuming an affine target function and one--dimensional input data. While being restricted to the Gaussian data and the realizable setting for the labels, an intriguing result in \citet{klivans_chen_meka} showed that fully poly-time learning of arbitrary depth 2 ReLU nets is possible if one is in the ``black-box query model''.

\paragraph{{\rm \bf Related Work on Provable Training of Neural Networks Using Regularization}} Using a regularizer is quite common in deep-learning practice and in recent times a number of works have appeared which have established some of these benefits rigorously.  In particular, \citet{wei2019regularization} show that for a specific classification task (noisy--XOR) definable in any dimension $d$, no NTK based 2 layer neural net can succeed in learning the distribution with low generalization error in $o(d^2)$ samples, while in $O(d)$ samples one can train the neural net using Frobenius/$\ell_2-$norm regularization. \citet{preetum_optimal} show that for a specific optimal value of the $\ell_2$- regularizer the double descent phenomenon can be avoided for linear nets - and that similar tuning is possible even for real world nets.  

% UNRESOLVED
% \mynote{ See Theorem 4.4 here \url{https://papers.nips.cc/paper/2020/hash/9afe487de556e59e6db6c862adfe25a4-Abstract.html} \cite{generalized_ntk} - its a Gaussian Noisy GD on a finite width net with regularizer - a distribution on the weights converges but its not mapped to the actual neural weights unless the width goes to infinity} 

In the seminal work \citet{rrt}, it was pointed out that one can add a regularization to a gradient Lipschitz loss and make it satisfy the dissipativity condition so that Stochastic Gradient Langevin Dynamics (SGLD) provably converges to its global minima. But SGLD is seldom used in practice, and to the best of our knowledge it remains unclear if the observation in \citet{rrt} can be used to infer the same about SGD. Also it remains open if there exists neural net losses which satisfy all the assumptions needed in the above result. We note that the convergence time in \citet{rrt} for SGLD is $\mathcal{O}\left(\frac{1}{\epsilon^5}\right)$ using an $\mathcal{O} \left ( \epsilon^4 \right )$ learning rate, while in our Theorem \ref{thm:sgd-sig-bce} SGD converges in expectation to the global infimum of the regularized neural loss in time, $\mathcal{O}\left(\frac{1}{\epsilon}\right)$ using a $\mathcal{O} \left ( \epsilon \right )$ step-length.  

In summary, to the best of our knowledge, it has remained an unresolved challenge to show convergence of SGD for logistic loss on any neural architecture with a constant number of gates while not constraining the distribution of the data to a specific functional form. {\it In this work, we exploit the use of some regularization to be able to resolve this optimization puzzle in our key result, Lemma \ref{thm:sgd-sig-bce} -- and in our experiments we show that the regularization needed may not harm downstream classification performance. Thus we take a step towards bridging this important lacuna in the existing theory of stochastic optimization for neural nets in general.}

\section{Setup and Main Results}\label{sec:mainthm}  

We start with defining the neural net architecture, the loss function and the algorithm for which we will prove our convergence results. 

\begin{definition}[{\bf Constant Step-Size SGD On Depth-2 Nets}]\label{def:sgd} 
Let, $\sigma : \R \rightarrow \R$ (applied elementwise for vector valued inputs) be atleast once differentiable activation function. Corresponding to it, consider the width $p$, depth $2$ neural nets with fixed outer layer weights $\va \in \R^p$ and trainable weights $\mW \in \R^{p \times d}$ as, 
\[ \R^d \ni \vx \mapsto f(\vx;\, \va, \mW) = \va^\top\sigma(\mW\vx) \in \R \]

Then, corresponding to a given set of $n$ binary training data  $(\vx_i,y_i) \in \R^d \times \{+1, -1\}$, with $\norm{\vx_i}_2 \leq B_x, ~i=1,\ldots,n$ define the individual data logistic losses $\tilde{L}_i(\mW) \coloneqq \log(1+e^{-y_i f_i\left(W\right)})$. Then for any $\reg >0$ let the regularized logistic empirical  risk be,
\begin{equation}\label{eq:loss-bce}
\tilde{L}(\mW) \coloneqq \frac{1}{n}\sum_{i=1}^n \tilde{L}_i(\mW) + \frac{\reg}{2} \norm{\mW}_F^2
\end{equation}

Correspondingly, we consider SGD with step-size $s>0$ as, \[ \mW^{k+1} = \mW^k - \frac{s}{b}\sum_{i \in \gB_k} \nabla \tilde{L}_i(\mW^k) -  s\reg \mW^k \]where $\gB_k$ is a randomly sampled mini-batch of size $b$. 
\end{definition}

\begin{definition}[\textbf{Properties of the Activation $\sigma$}]\label{def:sigma}
Let the $\sigma$ used in Definition \ref{def:sgd} be bounded s.t. $\abs{\sigma(x)} \leq B_{\sigma}$, $C^{\infty}$, $L-$Lipschitz and $L_{\sigma}'-$smooth. Further assume that $\exists$ a constant vector $\vc$ and positive constants $B_\sigma, M_D$ and $M_D'$ s.t $\sigma(\mathbf{0}) = \vc$ and $\forall x \in \R, \abs{\sigma'(x)} \leq M_D, \abs{\sigma''(x)} \leq M_D'$ .
\end{definition}

In the framework of \citet{weijie_sde}, the required step-length for convergence of the above SGD and the corresponding measure of the rate of convergence of the loss depend on the gradient Lipschitz smoothness and the Poincar\'e constant of the Gibbs's measure of the loss function, respectively. Hence, towards stating the final results, in terms of the above constants we can now quantify these as follows,  

\begin{lemma}[for Classification with Logistic Loss]\label{def:lambdavillani-bce}
    In the setup of binary classification as contained in Definition \ref{eq:loss-bce}, and the definition $M_D$ and $L$ as given in Definition \ref{def:sigma} above, there exists a constant $\reg_c = \frac{ M_{D} L B_{x}^2 \norm{a}^2_2 }{2}$ s.t $\forall \reg > \reg_c$ and $ s > 0$ the Gibbs measure $\sim \textup{exp} \left( - \frac{2 \tilde{L}}{s} \right)$ satisfies a Poincar\'e-type inequality with the corresponding constant $\reg_s $. 
    
    Moreover, if the activation satisfies the conditions of Definition \ref{def:sigma} then $\exists \textup{gLip}\left(\tilde{L} \right) > 0$ such that the empirical loss is $ \textup{gLip}\left(\tilde{L} \right)$-smooth and we can bound the smoothness coefficient of the empirical loss as, 

    \begin{equation}
        \textup{gLip}(\tilde{L}) \leq \sqrt{p}\left(\frac{ \sqrt{p} \norm{\va}_2 M_D^2 B_x}{4} + \left( \frac{2 + \norm{c}_2 +\norm{\va}_2 B_{\sigma}}{4} \right)M'_D B_x p + \reg \right)
    \end{equation}
 \end{lemma}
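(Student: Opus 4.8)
The plan is to prove the two assertions separately, exploiting that $\tilde L$ decomposes as a bounded, bounded-gradient data term $\tfrac1n\sum_i \tilde L_i$ plus the strongly confining quadratic $\tfrac{\reg}{2}\norm{\mW}_F^2$. For the Poincar\'e claim I would verify that $\tilde L$ is a Villani function in the sense required by \citet{weijie_sde} and then invoke their result that the Gibbs measure $\propto \exp(-2\tilde L/s)$ of such a function satisfies a Poincar\'e-type inequality. The easy Villani conditions are immediate: $C^\infty$ smoothness follows since $\sigma\in C^\infty$ and $z\mapsto\log(1+e^{-z})$ is smooth, while nonnegativity, coercivity, and integrability of $\exp(-2\tilde L/s)$ all follow because $\tilde L_i\ge 0$ and the regularizer forces $\tilde L$ to grow at least quadratically as $\norm{\mW}_F\to\infty$.

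The substantive step is the confinement condition. Writing $f_i(\mW)=\va^\top\sigma(\mW\vx_i)$, boundedness of $\sigma$ (i.e. $\abs{\sigma}\le B_\sigma$ from Definition \ref{def:sigma}) makes each $f_i$ and hence each $\tilde L_i$ bounded, while $\abs{\sigma'}\le M_D$ and $\norm{\vx_i}_2\le B_x$ yield the uniform bound $\norm{\nabla f_i}_2\le M_D B_x\norm{\va}_2$. Consequently $\nabla\tilde L = \tfrac1n\sum_i g'(y_if_i)\,y_i\,\nabla f_i + \reg\,\mW$, with $g(z)=\log(1+e^{-z})$, differs from the purely confining field $\reg\,\mW$ by a uniformly bounded perturbation, and the Laplacian of the data term is likewise bounded. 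Feeding these estimates into the confinement inequality, the quadratic term dominates exactly once $\reg$ exceeds the data-dependent threshold $\reg_c=\tfrac12 M_D L B_x^2\norm{\va}_2^2$, at which point the Villani condition holds and the framework of \citet{weijie_sde} returns a Poincar\'e constant $\reg_s$. The detailed verification of the inequality and the bookkeeping of constants is deferred to Appendix \ref{sec:villani_condition}.

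For the smoothness claim I would bound the operator norm of the Hessian $\nabla^2\tilde L = \tfrac1n\sum_i\nabla^2\tilde L_i + \reg\,\mathbf I$ by the triangle inequality over three contributions. The chain rule gives $\nabla^2\tilde L_i = g''(y_if_i)\,\nabla f_i\nabla f_i^\top + g'(y_if_i)\,y_i\,\nabla^2 f_i$. I would bound the first term using $0\le g''\le\tfrac14$ together with $\norm{\nabla f_i}_2\le M_D B_x\norm{\va}_2$, which produces the $\tfrac{\sqrt p\,\norm{\va}_2 M_D^2 B_x}{4}$ contribution; the second term using $\abs{g'}\le 1$ and the block-diagonal structure of $\nabla^2 f_i$, whose $k$-th block is $a_k\sigma''(\tp{\vw_k}\vx_i)\vx_i\tp{\vx_i}$ and hence is controlled through $\abs{\sigma''}\le M_D'$, $\norm{\vx_i}_2\le B_x$, with the bounds $B_\sigma$ and $\norm{\vc}_2$ (recall $\sigma(\mathbf 0)=\vc$) entering when the remaining intermediate factors are folded into the aggregate constant $\tfrac{2+\norm{\vc}_2+\norm{\va}_2 B_\sigma}{4}$; and the regularizer contributing exactly $\reg$. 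Converting between per-row and aggregate ($\mW\in\R^{p\times d}$) norms is what produces the displayed $\sqrt p$ and $p$ factors. This is a routine but careful estimate, carried out in Appendix \ref{sec:constants}.

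The main obstacle is the first part: establishing the confinement condition with the correct threshold $\reg_c$ and thereby the Poincar\'e inequality, since this is precisely where the interplay between the bounded logistic data term and the confining regularizer must be quantified exactly, and where the width-independence of $\reg_c$ must be made manifest. The smoothness bound, by contrast, is a direct Hessian estimate whose only mild subtlety is tracking the dimension-dependent norm conversions.
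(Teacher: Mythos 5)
Your smoothness claim follows essentially the paper's own route: the paper bounds the Lipschitz constant of each per-row gradient map $\vg_j(\mW)=\nabla_{\vw_j}\tilde L$ and concatenates over $j$ (Appendix \ref{sec:smoothness}), which is the same block-Hessian estimate you describe, with the same origin for the $\sqrt p$ and $p$ factors. One inconsistency in your description: you say you bound the second Hessian term using $\abs{g'}\le 1$, but then claim $B_\sigma$ and $\norm{\vc}_2$ enter the aggregate constant. Those constants appear in the paper precisely because it does \emph{not} use $\abs{g'}\le 1$; it uses $\abs{\ell'(y_if_i)}\le\tfrac12+\tfrac{\abs{f_i(\mW)}}{4}$ (Lemma \ref{eq:exp_ineq}) and then controls $\abs{f_i}$ via the boundedness of $\sigma$. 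If you genuinely use $\abs{g'}\le1$, neither $B_\sigma$ nor $\norm{\vc}_2$ can appear, so you cannot arrive at the displayed constant by the path you describe.

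The substantive divergence is in the confinement step, where your write-up contains a non-sequitur. Your estimates are all correct: $\abs{g'}\le 1$, $\norm{\nabla f_i}\le M_DB_x\norm{\va}_2$, and a uniformly bounded data-term Laplacian. But these give $\norm{\nabla\tilde L}\ge\reg\norm{\mW}_F-M_DB_x\norm{\va}_2$ and $\Delta\tilde L\le \mathrm{const}+\reg pd$, so $\tfrac1s\norm{\nabla\tilde L}^2-\Delta\tilde L\to+\infty$ for \emph{every} $\reg>0$; no threshold emerges from your estimates, and the sentence ``the quadratic term dominates exactly once $\reg$ exceeds $\reg_c$'' does not follow from anything you derived --- with a uniformly bounded perturbation there is no competing quadratic term for $\reg^2\norm{\mW}_F^2$ to defeat. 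The lemma as stated is still implied (Villani for all $\reg>0$ trivially gives it for $\reg>\reg_c$), so your argument is not wrong, but it is a genuinely different --- in fact sharper --- route than the paper's, and you mis-attribute the paper's threshold to it. In the paper (Appendix \ref{sec:villani_condition}), the threshold arises exactly because the gradient lower bound uses $\abs{\ell'(y_if_i)}\le\tfrac12+\tfrac{\abs{f_i}}{4}$ together with the Lipschitz growth $\abs{f_i}\le\norm{\va}_2(\norm{\vc}_2+LB_x\norm{\mW}_2)$ --- Lipschitzness rather than boundedness of $\sigma$, which is why the identical computation is reused for the unbounded SoftPlus in Theorem \ref{thm:softplus}. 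That choice injects a competing term $-\tfrac{\reg\norm{\va}_2^2M_DB_x^2L}{2}\norm{\mW}_F^2$ into the lower bound on $\norm{\nabla\tilde L}^2$, and positivity of the leading coefficient is what forces $\reg>\reg_c=\tfrac{M_DLB_x^2\norm{\va}_2^2}{2}$. So either keep your uniform bounds and state the stronger conclusion (dropping the threshold narrative), or, if you want to reproduce the paper's $\reg_c$, switch to the growth-dependent bound on $\ell'$; as written, the proposal conflates the two.
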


The precise form of the Poincar\'e-type inequalities used above is detailed in Theorem \ref{def:lambdas}.

% In the next lemma, in terms of the data and the activation's parameters, we specify bounds on the gradient smoothness of the loss used above and clarify why the  regularization threshold $\reg^{{\rm MSE}}_c, lambda_c$ are ``constants''. 

% \begin{lemma}\label{def:gLip}
% Let $\abs{\sigma(x)} \leq B_{\sigma}$, $\sigma(\mathbf{0}) = \vc$ and $\forall x \in \R, \abs{\sigma'(x)} \leq M_D, \abs{\sigma''(x)} \leq M_D'$ for some constant vector $\vc$ and positive constants $B_\sigma, M_D$ and $M_D'$.
% We can give an explicit expression of $\reg_c^{{\rm MSE}}$ as, $\reg^{{\rm MSE}}_c = 2\, M_D L B_x^2 \norm{\va}_2^2$ and bounds on the smoothness coefficient of the empirical loss as, ${\rm gLip}(\tilde{L})$
% \[{\rm gLip}(\tilde{L}) \leq \sqrt{p}\left({\norm{\va}_2 B_x} B_y L_{\sigma}' +  \sqrt{p}\norm{\va}_2^2 M_D^2 B_x^2 + {{p} \norm{\va}_2^2 B_x^2} M_D' B_{\sigma} + \reg\right)\]

% Now that all the required groundwork has been laid, we state the main theorem as follows,  

% \note{Can we refine theorem statement? A : I think not!}

\begin{theorem}[\bf{Bounds on Error for Arbitrary Initialization.}]\label{thm:error_bound}
We continue in the setup of logistic loss from Definitions \ref{def:sgd} and \ref{def:sigma} and Lemma \ref{def:lambdavillani-bce}. For any $s > 0$, define the probability measure $\mu_{s} \coloneqq \frac{1}{Z_{s}} \exp\left({-\frac{2\tilde{L}(\mW)}{s}}\right)$, $Z_{s}$ being the normalization factor. Then, $\forall ~T > 0$ and desired accuracy $\epsilon > 0$, $\exists$ constants $A(\tilde{L})$, $B(T,\tilde{L})$ and $C(s,\tilde{L})$ s.t if the above SGD  is executed at a constant step-size 
\[s = s^*(\epsilon,T) \coloneqq \min\left(\frac{1}{{\rm gLip}(\tilde{L})}, \epsilon \cdot \frac{1}{(A(\tilde{L}) + B(T,\tilde{L}))}\right)\] 
with the weights $\mW^0$ initialized from any distribution with p.d.f $\rho_{\rm initial} \in \normltwo(\frac{1}{\mu_{s^*}})$ and then, the error at the end of having taken  $k = \frac{T}{s^*}$ SGD steps can bounded as, 
\[
    {\E}\tilde{L}(\mW^k) - \inf_{\mW}\tilde{L}(\mW) \leq \epsilon + C(s^*,\tilde{L}) \norm{\rho_{\rm initial} - \mu_{s^*}}_{\mu_{s^*}^{-1}} e^{ - s^*\reg_{s^*} \cdot k} 
\]
%We note here that $T$ here is a free parameter, which is used to determine the step-size $s$ and the $\rho_{\rm initial}$
where 
\[\norm{\rho_{\rm initial} - \mu_{s^*}}_{\mu_{s^*}^{-1}} \coloneqq \left(\int_{\mathbb{R^d}} \left(\rho_{{\rm initial}}(x) - \mu_{s^*} \right)^2 \mu_{s^*}^{-1} d{x}\right)^{\frac{1}{2}}\]

measures the `gap' between the initial distribution and the stationary distribution

\end{theorem}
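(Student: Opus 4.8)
The plan is to obtain this bound as a direct instantiation of the constant-step-size SGD convergence theorem of \citet{weijie_sde}, whose two standing hypotheses --- gradient-Lipschitz smoothness and a Poincar\'e inequality for the Gibbs measure $\propto \exp(-2\tilde{L}/s)$, i.e. the Villani property --- are exactly what Lemma \ref{def:lambdavillani-bce} supplies for the regularized logistic loss. So the first step is to check the standing hypotheses: Lemma \ref{def:lambdavillani-bce} already gives the explicit smoothness coefficient ${\rm gLip}(\tilde{L})$ and, for $\reg > \reg_c$, the Poincar\'e constant $\reg_s$ of $\mu_s$; it remains only to observe that the Frobenius term $\frac{\reg}{2}\norm{\mW}_F^2$ makes $\tilde{L}$ coercive, so $\mu_s$ is a genuine, normalizable probability measure ($Z_s < \infty$) for every $s > 0$, which is what lets us speak of $\mu_{s^*}$ and of the weighted $L^2$ norm at all.

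Then I would set up the error decomposition that the quoted theorem is built on,
\[
\E\tilde{L}(\mW^k) - \inf_{\mW}\tilde{L}(\mW) = \underbrace{\left(\E\tilde{L}(\mW^k) - \E_{\mu_{s^*}}\tilde{L}\right)}_{\text{law of } \mW^k \text{ vs. } \mu_{s^*}} + \underbrace{\left(\E_{\mu_{s^*}}\tilde{L} - \inf_{\mW}\tilde{L}\right)}_{\text{finite-temperature gap}} ,
\]
and bound each piece using the ingredients of \citet{weijie_sde}. The second bracket is the price for targeting the smoothed Gibbs measure instead of the minimizer and is controlled by a bound of the form $A(\tilde{L})\cdot s$. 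The first bracket splits further, via the $L^2(\mu_{s^*}^{-1})$ (equivalently $\chi^2$) machinery, into an exponentially decaying relaxation term and a discretization-bias term of size $B(T,\tilde{L})\cdot s$ accumulated over the continuous-time horizon $T$. Grouping the two $\propto s$ contributions gives the threshold $(A(\tilde{L}) + B(T,\tilde{L}))\,s^* \le \epsilon$ precisely when $s^* \le \epsilon/(A(\tilde{L})+B(T,\tilde{L}))$; the other branch $s^* \le 1/{\rm gLip}(\tilde{L})$ is the stability condition the discretization analysis requires, and together they pin down $s^* = \min\!\left(\tfrac{1}{{\rm gLip}(\tilde{L})}, \epsilon/(A+B)\right)$.

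For the surviving relaxation term I would invoke the spectral-gap estimate: since $\mu_{s^*}$ satisfies a Poincar\'e inequality with constant $\reg_{s^*}$, the weighted $L^2$ distance of the iterate law to $\mu_{s^*}$ contracts geometrically, producing the factor $C(s^*,\tilde{L})\,\norm{\rho_{\rm initial} - \mu_{s^*}}_{\mu_{s^*}^{-1}}\, e^{-s^*\reg_{s^*} k}$. The hypothesis $\rho_{\rm initial}\in\normltwo(\tfrac{1}{\mu_{s^*}})$ is exactly what makes the initial gap $\norm{\rho_{\rm initial}-\mu_{s^*}}_{\mu_{s^*}^{-1}}$ finite so this argument can start, and with $k = T/s^*$ the exponent telescopes to $e^{-\reg_{s^*}T}$. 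Summing this relaxation term with the $\le\epsilon$ bound from the previous step yields the stated inequality.

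I expect the main obstacle to be the faithful translation of the discrete SGD recursion of Definition \ref{def:sgd} into the idealized process analyzed in \citet{weijie_sde}: one has to verify that the minibatch-gradient noise carries the variance and scaling needed for the discretized chain to relax toward $\mu_{s^*}\propto\exp(-2\tilde{L}/s^*)$ at a temperature tied to the step size, and that the ceiling $1/{\rm gLip}(\tilde{L})$ is the exact step-size bound their bias estimate demands. Once the smoothness coefficient and Poincar\'e constant of Lemma \ref{def:lambdavillani-bce} are inserted, reading off the explicit $A(\tilde{L})$, $B(T,\tilde{L})$ and $C(s,\tilde{L})$ and confirming that $\reg_{s^*}$ enters the exponent multiplied by $s^*$ (so that $k = T/s^*$ steps give a contraction governed by the horizon $T$) is the only genuinely delicate bookkeeping; the remainder is a direct quotation of the theorem.
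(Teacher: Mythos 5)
Your proposal is correct and follows essentially the same route as the paper: the paper's proof likewise establishes the two standing hypotheses --- the Villani/Poincar\'e property for $\reg > \reg_c$ and the gradient-Lipschitz bound, i.e.\ the content of Lemma \ref{def:lambdavillani-bce}, computed explicitly in Appendices \ref{sec:villani_condition} and \ref{sec:smoothness} --- and then invokes Theorem 3 of \citet{weijie_sde} at the stated step size. The only difference is organizational: the paper carries out those verification calculations inside the proof of Theorem \ref{thm:error_bound} (and reads Lemma \ref{def:lambdavillani-bce} off from them), whereas you cite the lemma as part of the setup and instead sketch the internal error decomposition of the cited theorem, which the paper simply quotes.
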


\begin{lemma}[\bf{Global Convergence of SGD on Sigmoid and Tanh Neural Nets of 2 Layers for Any Width and Any Data - for Binary Classification With Logistic Loss}]\label{thm:sgd-sig-bce}
We continue in the setup of logistic loss from Definitions \ref{def:sgd} and \ref{def:sigma} and Lemma \ref{def:lambdavillani-bce}. For any $s > 0$, define the probability measure $\mu_{s} \coloneqq \frac{1}{Z_{s}} \exp\left({-\frac{2\tilde{L}(\mW)}{s}}\right)$, $Z_{s}$ being the normalization factor. Then, $\forall ~T > 0,$ and desired accuracy, $\epsilon > 0$, $\exists$ constants $A(\tilde{L})$, $B(T,\tilde{L})$ and $C(s,\tilde{L})$ s.t if the above SGD  is executed at a constant step-size 

\[s = s^*\left (\frac{\epsilon}{2}, T \right ) \coloneqq \min\left(\frac{1}{{\rm gLip}(\tilde{L})}, \frac{\epsilon}{2 \cdot (A(\tilde{L}) + B(T,\tilde{L}))}\right)\] 

with the weights $\mW^0$ initialized from a distribution with p.d.f $\rho_{\rm initial} \in \normltwo(\frac{1}{\mu_{s^*}})$ and $\norm{\rho_{\rm initial} - \mu_{s^*}}_{\mu_{s^*}^{-1}}  \leq \frac{\epsilon}{2 \cdot C(s^*,\tilde{L}) } \cdot e^{ \reg_{s^*} \cdot T}$ -- then, in expectation, the regularized empirical risk of the net, $\tilde{L}$ would converge to its global infimum, with the rate of convergence given as, 
\[ {\E}\tilde{L}(\mW^\frac{T}{s^*}) - \inf_{\mW}\tilde{L}(\mW) \leq \epsilon.\]
\end{lemma}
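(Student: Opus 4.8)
The plan is to derive Lemma \ref{thm:sgd-sig-bce} as an essentially immediate corollary of the error bound already furnished by Theorem \ref{thm:error_bound}: I would instantiate that bound at half the target accuracy, choose the number of steps so that the decaying exponential reaches the horizon $T$, and then use the Lemma's hypothesis on the initial distribution to absorb the residual term. All the analytic content lives upstream — the Poincar\'e/Villani property of the Gibbs measure $\mu_{s}$ for $\reg > \reg_c$ and the finite smoothness coefficient ${\rm gLip}(\tilde L)$, both from Lemma \ref{def:lambdavillani-bce}, together with the SGD-to-SDE comparison of \citet{weijie_sde} that yields Theorem \ref{thm:error_bound}. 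As a preliminary I would note that sigmoid and tanh gates are bounded, $C^\infty$, Lipschitz and smooth, so they satisfy Definition \ref{def:sigma}; consequently Lemma \ref{def:lambdavillani-bce} applies and Theorem \ref{thm:error_bound} holds verbatim for these nets, for every $\reg > \reg_c$ and any width and data.

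Next I would invoke Theorem \ref{thm:error_bound} with its accuracy parameter set to $\epsilon/2$ in place of $\epsilon$. At this accuracy the prescribed step-size is exactly
\[ s^* = s^*\!\left(\tfrac{\epsilon}{2}, T\right) = \min\!\left(\frac{1}{{\rm gLip}(\tilde L)},\, \frac{\epsilon}{2\,(A(\tilde L) + B(T,\tilde L))}\right), \]
which is precisely the step-size appearing in the Lemma, and the theorem then gives
\[ \E\tilde L(\mW^k) - \inf_{\mW}\tilde L(\mW) \leq \frac{\epsilon}{2} + C(s^*,\tilde L)\,\norm{\rho_{\rm initial} - \mu_{s^*}}_{\mu_{s^*}^{-1}}\, e^{-s^*\reg_{s^*} k}. \]
Choosing $k = T/s^*$ — the iterate named in the conclusion — makes the exponent telescope, since $e^{-s^*\reg_{s^*} k} = e^{-\reg_{s^*} T}$.

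Then I would plug in the Lemma's standing assumption $\norm{\rho_{\rm initial} - \mu_{s^*}}_{\mu_{s^*}^{-1}} \leq \frac{\epsilon}{2\,C(s^*,\tilde L)}\, e^{\reg_{s^*} T}$. The factor $C(s^*,\tilde L)$ cancels and the two exponentials $e^{\reg_{s^*} T}$ and $e^{-\reg_{s^*} T}$ annihilate, so the second term is at most $\epsilon/2$. Adding the two halves yields $\E\tilde L(\mW^{T/s^*}) - \inf_{\mW}\tilde L(\mW) \leq \epsilon$, which is the claim.

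I do not expect any genuine obstacle at this final stage, as the argument is pure bookkeeping once Theorem \ref{thm:error_bound} is in hand; the real difficulty of the paper sits earlier, in proving Lemma \ref{def:lambdavillani-bce} — establishing the width-independent critical regularization $\reg_c = \frac{M_D L B_x^2 \norm{a}_2^2}{2}$ and the Poincar\'e inequality for the exponentially tilted Gibbs measure of a non-convex neural loss — and in transporting the results of \citet{weijie_sde} to obtain Theorem \ref{thm:error_bound}. The one point I would still double-check here is that the hypothesis set is non-vacuous: that there exist densities $\rho_{\rm initial} \in \normltwo(\tfrac{1}{\mu_{s^*}})$ meeting the stated gap bound (for instance $\mu_{s^*}$ itself, or a sufficiently concentrated perturbation of it), so that the conclusion is not conditionally void.
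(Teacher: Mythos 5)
Your proposal is correct and follows essentially the same route as the paper: the paper likewise obtains Lemma \ref{thm:sgd-sig-bce} by running the error bound (Theorem 3 of \citet{weijie_sde}, exactly as used for Theorem \ref{thm:error_bound}) at accuracy $\frac{\epsilon}{2}$ with step-size $s^*(\frac{\epsilon}{2},T)$, setting $k = T/s^*$ so the exponential collapses to $e^{-\reg_{s^*}T}$, and then using the hypothesis on $\norm{\rho_{\rm initial} - \mu_{s^*}}_{\mu_{s^*}^{-1}}$ to bound the residual term by $\frac{\epsilon}{2}$. Your added remark on non-vacuousness of the initialization hypothesis (e.g.\ taking $\rho_{\rm initial} = \mu_{s^*}$) is a sensible check that the paper does not spell out, but it does not change the argument.
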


The proof of Theorem \ref{thm:error_bound} is given in Section \ref{sec:proof_mainthm} and the proof of Lemma \ref{def:lambdavillani-bce} can be read off from the calculations done as a part of the proof of Theorem \ref{thm:error_bound}. Lemma \ref{thm:sgd-sig-bce} can be obtained along the same lines as Theorem \ref{thm:error_bound} as follows. 

Similar to Theorem \ref{thm:error_bound}, we consider SGD executed at a constant step size $s = s^*(\frac{\epsilon}{2}, T)$ -- and as earlier this constant is defined in terms of the constants $A(L), B(T,\tilde{L}), C(s, \tilde{L})$.
%$= \min \left(\frac{1}{{\rm gLip}(\tilde{L})}, \frac{\epsilon}{2 \cdot \left(A(\tilde{L})\right) + B(T, \tilde{L})}\right)
%\]
Further, we invoke the condition, that we consider the initial weights of the SGD being sampled from an initial distribution with p.d.f $\rho_{{\rm initial}}$ such that,

\[
    \norm{\rho_{\rm initial} - \mu_{s^*}}_{\mu_{s^*}^{-1}}  \leq \frac{\epsilon}{2} \cdot \frac{e^{ 
 \reg_{s^*} \cdot T}} { C(s^*,\tilde{L}) } 
\]

Then for $k = \frac{T}{s^*}$, combining the above into Theorem 3 from \cite{weijie_sde} (as was also used to get the guarantee of Theorem \ref{thm:error_bound}) we get the guarantee in Lemma \ref{thm:sgd-sig-bce},

\[
    {\E}\tilde{L}(\mW^k) - \inf_{\mW}\tilde{L}(\mW) \leq \frac{\epsilon}{2} + C(s^*,\tilde{L}) \norm{\rho_{\rm initial} - \mu_{s^*}}_{\mu_{s^*}^{-1}} e^{ - s^*\reg_{s^*} \cdot k} \leq \frac{\epsilon}{2}+\frac{\epsilon}{2} = \epsilon
\]

We make a few quick remarks about the nature of the above guarantees,  
{\it Firstly,} we note that the ``time horizon'' $T$ above is a free parameter - which in turn parameterizes the choice of the step-size and the initial weight distribution. Choosing a larger $T$ makes the constraints on the initial weight distribution weaker at the cost of making the step-size smaller and the required number of SGD steps larger. But for any value of $T$, the above theorem guarantees that SGD, initialized from weights sampled from a certain class of distributions, converges in expectation to the global minima of the regularized empirical loss for our nets for any data and width, in time ${\mathcal O}(\frac{1}{\epsilon})$ using a learning rate of ${\mathcal O}(\epsilon)$. 

% \note{Mention the proximity condition on the initial distribution above}

{\it Secondly,} we note that the phenomenon of a lower bound on the regularization parameter being needed for certain nice learning theoretic property to emerge has been seen in kernel settings too. \citep{aos_regularizer}. 

Also, to put into context the emergence of a critical value of the regularizer for nets as in the above theorem, we recall the standard result, that there exists an optimal value of the $\ell_2-$regularizer at which the excess risk of the similarly penalized linear regression becomes dimension free (Proposition 3.8, \cite{bach_ltfp}). However, we recall that the quantities required for computing this ``optimal'' regularizer are not knowable while training and hence it is not practically implementable. Thus, we see that for binary classification,  one can define a notion of an ``optimal'' regularizer and it remains open to investigate if such a similar threshold of regularization also exists for nets. Our above theorem can be seen as a step in that direction.

{\it Thirdly,} we note that the lowerbounds on training time of neural nets proven in works like  \citet{klivans_superpoly}  do not apply here since these are proven for SQ algorithms and SGD is not of this type. 

% \begin{remark}
\textit{Finally}, note that the threshold values of regularization computed above, $\reg_c$, do not explicitly depend on the training data or the neural architecture, consistent with observations in \citet{bartlett_book, chaos}. It depends on the activation and scales with the norm of the input data and the outer layer of weights.  
% \end{remark}
% \note{discuss the explicit lambda bound}

For intuition, suppose in the binary classification setting we set the outer layer weights s.t we  have $\norm{\va}_2 \cdot B_x =1$. This leads to $\reg_c =  \frac{M_D L}{2}.$ For the sigmoid activation, $\sigma_{\beta}(x)$, by calculations as above we would get,  $\reg_c$ in this case (say $\reg_{c}^{si, \beta}$) to be $ = \frac{\beta^2}{32}.$ Since $\beta = 1$ is the most widely used setting for the above sigmoid activation, this results in, 
\begin{equation}\label{lambsigvil} 
    \reg_{c}^{ si,1} \approx 0.03125
\end{equation}

%\note{Thus we see, that in the binary classification setting our theory gives a slightly stronger result, as in our novel guarantee for SGD convergence sets in at an order of magnitude lower value for the regularizer.}

\subsection{Global Convergence of Continuous Time SGD on Nets with SoftPlus Gates} 

In, \citet{weijie_sde} the authors had established that, if the loss $\tilde{L}$ is gradient Lipschitz, then over any fixed time horizon $T >0$, as $s \rightarrow 0$, the dynamics of the SGD in Definition \ref{def:sgd} is arbitrarily well approximated (in expectation) by the unique global solution that exists for the Stochastic Differential Equation (SDE),
\begin{align}\label{def:sde}
\dd{{\mathbf W}_s(t)} = -\nabla \tilde{L}(\W_s(t)) \dd{t} + \sqrt{s}\dd{{\mathbf B}(t)} \quad \quad \text{(SGD--SDE)}
\end{align} where ${\mathbf B}(t)$ is the standard Brownian motion. The SGD convergence proven in the last section critically uses this mapping to a SDE.  In \citet{weijie_sde} it was further pointed out that if we only want to get a non-asymptotic convergence rate for the continuous time dynamics, the smoothness of the loss function is not needed and only the Villani condition suffices. In this short section we shall exploit this to show convergence of continuous time SGD on $\tilde{L}$ with the activation function being the unbounded `SoftPlus'. Also, in contrast to the guarantee about SGD in the previous subsection here we shall see that the SDE converges exponentially faster i.e \textit{at a  linear rate}. 

\begin{definition}[SoftPlus activation]
\quad For $\beta > 0,$ $x\in \R,$ define the SoftPlus activation function as \[{\rm SoftPlus}_{\beta}(x) = \frac{1}{\beta}\log_e{\left(1 + \exp(\beta x)\right)}\]
\end{definition} \begin{remark}
Note that $\lim_{\beta \rightarrow \infty}{\rm SoftPlus}_{\beta}(x) = {\rm ReLU}(x).$ Also note that for $f(x) = $ SoftPlus$_\beta(x)$, $f'(x) = \sigma_{\beta}(x)$ (sigmoid function as defined above) and hence $\abs{f'(x)} \leq M_D$ for $M_D = 1$ and $f(x)$ is $L-$Lipschitz for $L = 1$. 
%Taking $\norm{\a}_2 \cdot B_x = 1,$ we get the critical $\reg$ for SoftPlus $\reg_{c}^{sp} = 2.$
\end{remark}
\renewcommand{\W}{\mW}

Recall the following fact that was proven as a part of Lemma \ref{def:lambdavillani-bce},

\begin{lemma}
There exists a constant $\reg_c := \frac{M_D L B_x^2 \norm{\va}_2^2}{2}$ s.t $\forall ~\reg > \reg_{c} ~\& ~s>0$, the Gibbs' measure $\mu_{s} \coloneqq \frac{1}{Z_{s}} \exp\left({-\frac{2\tilde{L}(\mW)}{s}}\right)$, $Z_{s}$ being the normalization factor, satisfies a Poincar\'e-type inequality with the corresponding constant $\reg_s$.
\end{lemma}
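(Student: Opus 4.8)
The plan is to show that the scaled potential $V \coloneqq \frac{2}{s}\tilde{L}$ is a \emph{Villani function}, and then invoke the standard implication---the one underlying the framework of \citet{weijie_sde}---that the Gibbs measure $\mu_{s}\propto e^{-V}$ of a Villani potential satisfies a Poincar\'e inequality, with the explicit constant $\reg_s$ being the one recorded in Theorem \ref{def:lambdas}. Concretely, the content to verify is the three defining properties of such a potential: (i) $V\in C^{\infty}$; (ii) $V$ is confining with $e^{-V}$ integrable; and (iii) the quantitative drift/tail estimate controlling $\tfrac14\norm{\nabla V}^2-\tfrac12\Delta V$ from below, which is what ultimately supplies $\reg_s$. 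Since this lemma merely recalls the Poincar\'e portion of Lemma \ref{def:lambdavillani-bce}, the work is really to re-derive that estimate in a form visibly independent of the width $p$.

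First I would record the decomposition $\tilde{L}=\ell+\frac{\reg}{2}\norm{\mW}_F^2$ with $\ell(\mW)=\frac1n\sum_i \log\!\big(1+e^{-y_i f_i(\mW)}\big)$, so that $\nabla\tilde{L}=\nabla\ell+\reg\,\mW$ and $\nabla^2\tilde{L}=\nabla^2\ell+\reg\, I$. Property (i) is immediate because $\sigma\in C^{\infty}$ and $t\mapsto\log(1+e^{-t})$ is smooth. For (ii) the quadratic term dominates: for bounded gates ($\abs{\sigma}\le B_{\sigma}$) the loss $\ell$ is bounded, while for SoftPlus it grows at most linearly in $\norm{\mW}_F$; in either case $\tilde{L}(\mW)\to\infty$ quadratically and $\int e^{-2\tilde{L}/s}<\infty$. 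The real work is (iii). Here I would compute, row by row, $\nabla_{\mW_j}f_i=a_j\,\sigma'((\mW\vx_i)_j)\,\vx_i$, giving the width-independent bound $\norm{\nabla f_i}^2\le M_D^2 B_x^2\norm{\va}_2^2$, together with the block-diagonal Hessian $\nabla^2 f_i$ whose $j$-th block is $a_j\sigma''((\mW\vx_i)_j)\vx_i\vx_i^{\top}$. Writing $h(t)=\log(1+e^{-t})$ (so $\abs{h'}\le 1$ and $0<h''\le\tfrac14$) and assembling $\nabla^2 g_i=h''(\nabla f_i)(\nabla f_i)^{\top}+h'\,y_i\nabla^2 f_i$, the indefinite part of $\nabla^2\ell$ is controlled, and once $\reg$ exceeds the threshold
\[
\reg_c=\tfrac12\,\sup_i\norm{\nabla f_i}^2\le \frac{M_D\,L\,B_x^2\norm{\va}_2^2}{2},
\]
the regularizer dominates the negative curvature of the loss, keeping the drift quantity in (iii) bounded below by a positive, width-independent amount. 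Feeding this into the Villani-to-Poincar\'e mechanism then yields $\reg_s$.

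The main obstacle is exactly step (iii) under the paper's deliberately minimal hypotheses: \emph{no} data assumption beyond $\norm{\vx_i}_2\le B_x$ and \emph{no} width bound. This forces every estimate on $\nabla\ell$ and $\nabla^2\ell$ to hold uniformly over all of $\R^{p\times d}$ and, crucially, to be independent of $p$---which is why one isolates the outer-layer scale $\norm{\va}_2$ and the per-example gradient norm rather than the full (width-dependent) smoothness constant $\mathrm{gLip}(\tilde{L})$. The delicate point is that the threshold should depend only on first-order activation data ($M_D$, $L$) and be width-free, even though the loss is genuinely non-convex through the $\sigma''$ term; reconciling the reported $\reg_c$ with a naive strong-convexity bound (which would instead produce a second-derivative factor $M_D'$) is where the careful bookkeeping lives. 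A secondary subtlety is SoftPlus: since the loss is then unbounded, one cannot retreat to a bounded-perturbation (Holley--Stroock) argument around the Gaussian $e^{-\frac{\reg}{s}\norm{\mW}_F^2}$, and must verify the drift inequality directly, using that SoftPlus has $M_D=1$ and is $1$-Lipschitz so the loss gradient stays uniformly bounded and the quadratic confinement still wins at infinity.
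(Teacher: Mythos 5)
Your overall skeleton matches the paper's: verify that the regularized loss is a Villani function (smoothness and confinement being the easy parts, the drift condition being the real content) and then invoke the Poincar\'e-type inequality for Gibbs measures of Villani potentials (Theorem \ref{def:lambdas}, i.e.\ the mechanism of \citet{weijie_sde}). The gap is in how you establish the drift condition, and specifically in where the threshold $\reg_c$ comes from. In the paper (Appendix \ref{sec:villani_condition}) the threshold emerges from a \emph{gradient} estimate, not a curvature estimate: one expands $\norm{\nabla\tilde{L}}^2=\norm{\nabla\ell+\reg\mW}^2\geq \reg^2\norm{\mW}_F^2+2\reg\langle \mW,\nabla\ell\rangle$ and bounds the cross term using $\abs{\ell'(y_if_i)}\leq\frac12+\frac{\abs{f_i}}{4}$ (Lemma \ref{eq:exp_ineq}) together with the linear growth $\abs{f_i}\leq\norm{\va}_2\left(\norm{\vc}_2+LB_x\norm{\mW}_F\right)$; this produces a negative term of size $\frac{\reg\, M_DLB_x^2\norm{\va}_2^2}{2}\norm{\mW}_F^2$, and $\reg>\reg_c=\frac{M_DLB_x^2\norm{\va}_2^2}{2}$ is exactly what keeps the net quadratic coefficient positive. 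The Laplacian is then shown to grow at most \emph{linearly} in $\norm{\mW}_F$, so $\frac1s\norm{\nabla\tilde{L}}^2-\Delta\tilde{L}$ diverges.

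Your proposed mechanism --- ``once $\reg$ exceeds $\frac12\sup_i\norm{\nabla f_i}^2$ the regularizer dominates the negative curvature'' --- cannot produce this threshold. In your own decomposition $\nabla^2\bigl(h(y_if_i)\bigr)=h''\,(\nabla f_i)(\nabla f_i)^{\top}+h'\,y_i\nabla^2 f_i$, the rank-one term built from $\nabla f_i$ is positive semidefinite (since $h''>0$), so it contributes no negative curvature at all; the only indefinite part is $h'\,y_i\nabla^2 f_i$, whose operator norm is governed by $\sigma''$, i.e.\ by $M_D'$ and the outer weights, not by $M_DL$. Hence curvature domination yields a threshold of the form $c\,M_D'B_x^2\cdot(\text{outer-weight factor})$ --- a genuinely different quantity with a different dependence on the activation. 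You flag precisely this discrepancy yourself (``a naive strong-convexity bound would instead produce a second-derivative factor $M_D'$\,\ldots\,where the careful bookkeeping lives''), but the bookkeeping is never done: the identity $\reg_c=\frac12\sup_i\norm{\nabla f_i}^2$ is asserted rather than derived from anything in your argument, and no step is supplied converting curvature domination at infinity into the actual Villani condition $\lim_{\norm{\mW}_F\to\infty}\bigl(\frac1s\norm{\nabla\tilde{L}}^2-\Delta\tilde{L}\bigr)=+\infty$ (that passage needs a separate argument, e.g.\ integrating strong convexity along rays to lower-bound $\norm{\nabla\tilde{L}}$ and separately upper-bounding $\Delta\tilde{L}$). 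To repair the proof, replace the Hessian route by the paper's gradient cross-term estimate above (or note the even simpler bound $\abs{\ell'}\leq 1$, under which the cross term is only linear in $\norm{\mW}_F$ and the drift condition follows directly); as written, the central quantitative step of the lemma is missing.
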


\begin{theorem}[{\bf Continuous Time SGD Converges to Global Minima of SoftPlus Nets in Linear Time}]\label{thm:softplus}
We consider the SDE as given in Equation \ref{def:sde} on a Frobenius norm regularized logistic empirical loss on depth$-2$ neural nets as specified in Equation \ref{eq:loss-bce}, while using $\sigma(x) = {\rm SoftPlus}_{\beta}(x)$ for $\beta > 0$, the regularization threshold being s.t $\reg > \reg_{c} = \frac{M_DLB_x^2\norm{\va}_2^2}{2}$ and with the weights $\mW_0$ being initialized from a distribution with p.d.f $\rho_{\rm initial} \in \normltwo(\frac{1}{\mu_{s}})$. 

Then, for any $S>0$, $\exists ~G(S, \tilde{L})$ and $C(s, \tilde{L})$, an increasing function of $s$, s.t for any step size $0 < s \leq \min\left\{\frac{\epsilon}{2G(S, \tilde{L})}, S\right\}$  and for $t \geq \frac{1}{\reg_s} \log{\left(\frac{2\, C(s, \tilde{L}) \norm{\rho_{\rm initial}  - \mu_s}_{\mu_s^{-1}}}{\epsilon}\right)}$ we have that,
\[\E\, \tilde{L}(\W(t)) - \min_{\W}\tilde{L}(\W) \leq \epsilon.\]
\end{theorem}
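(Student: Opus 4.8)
The plan is to reduce the statement to two ingredients that are already in place: the Poincar\'e-type inequality for the Gibbs measure $\mu_s \propto \exp(-2\tilde{L}/s)$ established in Lemma \ref{def:lambdavillani-bce} (and recalled just above), and the continuous-time convergence estimate of \citet{weijie_sde} which, unlike the discrete SGD case, requires only the Villani condition and not gradient-Lipschitzness of $\tilde{L}$. This last point is exactly what makes the unbounded SoftPlus activation admissible, so the \emph{first} thing I would check is that ${\rm SoftPlus}_{\beta}$ feeds the hypotheses of the recalled Poincar\'e lemma. Indeed it is $C^\infty$, is $1$-Lipschitz, and has ${\rm SoftPlus}_{\beta}' = \sigma_\beta$ bounded by $M_D = 1$ and ${\rm SoftPlus}_{\beta}'' = \beta\sigma_\beta(1-\sigma_\beta)$ bounded by $M_D' = \beta/4$; crucially, the constant $\reg_c = \tfrac{M_D L B_x^2\norm{\va}_2^2}{2}$ and the entire Villani calculation depend only on these derivative bounds and never on a bound $B_\sigma$ on $\sigma$ itself. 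Hence for every $\reg > \reg_c$ the measure $\mu_s$ obeys the Poincar\'e-type inequality with constant $\reg_s$ for all $s>0$, with the precise form given in Theorem \ref{def:lambdas}.

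Granting this, the \emph{second} step is to invoke the continuous-time result of \citet{weijie_sde}: the Villani property implies that the law $\rho_t$ of $\W(t)$ converges to $\mu_s$ and that the $\chi^2$-type distance contracts at the Poincar\'e rate, $\norm{\rho_t - \mu_s}_{\mu_s^{-1}} \leq e^{-\reg_s t}\,\norm{\rho_{\rm initial} - \mu_s}_{\mu_s^{-1}}$, where the assumption $\rho_{\rm initial}\in\normltwo(\tfrac{1}{\mu_s})$ guarantees the right-hand side is finite. The \emph{third} step is the standard bias/transient split of the loss gap,
\[
\E\,\tilde{L}(\W(t)) - \min_{\W}\tilde{L}(\W) = \underbrace{\left(\E_{\mu_s}\tilde{L} - \min_{\W}\tilde{L}\right)}_{\text{bias of the Gibbs measure}} + \underbrace{\left(\E_{\rho_t}\tilde{L} - \E_{\mu_s}\tilde{L}\right)}_{\text{transient}},
\]
after which the two pieces are bounded separately.

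For the transient term I would apply Cauchy--Schwarz in $\normltwo(\mu_s)$: writing $\E_{\rho_t}\tilde{L} - \E_{\mu_s}\tilde{L} = \int \tilde{L}\,(\rho_t-\mu_s)\,\dd{x}$ and pairing $\tilde{L}\,\mu_s^{1/2}$ against $(\rho_t-\mu_s)\mu_s^{-1/2}$ gives $\abs{\E_{\rho_t}\tilde{L} - \E_{\mu_s}\tilde{L}} \leq \norm{\tilde{L}}_{\normltwo(\mu_s)}\,\norm{\rho_t-\mu_s}_{\mu_s^{-1}}$, so one may take $C(s,\tilde{L}) = \norm{\tilde{L}}_{\normltwo(\mu_s)}$; this is finite and increasing in $s$ because $\tilde{L}$ grows at most quadratically (the logistic part is Lipschitz in $\W$, the regularizer quadratic) while $\mu_s$ has Gaussian tails coming from the $\tfrac{\reg}{2}\norm{\W}_F^2$ term, tails that a larger $s$ flattens. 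Combined with the contraction this piece is at most $C(s,\tilde{L})\,e^{-\reg_s t}\,\norm{\rho_{\rm initial}-\mu_s}_{\mu_s^{-1}}$, which is $\leq \tfrac{\epsilon}{2}$ precisely when $t \geq \tfrac{1}{\reg_s}\log\!\big(\tfrac{2C(s,\tilde{L})\norm{\rho_{\rm initial}-\mu_s}_{\mu_s^{-1}}}{\epsilon}\big)$, the time threshold in the statement. For the bias term I would use a Laplace/Gibbs-concentration estimate giving $\E_{\mu_s}\tilde{L} - \min_{\W}\tilde{L} \leq G(S,\tilde{L})\,s$ for all $0<s\leq S$; this is where $S$ and $G(S,\tilde{L})$ enter, and choosing $s \leq \tfrac{\epsilon}{2G(S,\tilde{L})}$ forces this piece below $\tfrac{\epsilon}{2}$. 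Adding the two halves yields the claimed bound of $\epsilon$.

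The step I expect to be the main obstacle is not any single inequality but the \emph{verification that unboundedness of SoftPlus is harmless}: one must confirm that (i) the Villani/Poincar\'e derivation of Lemma \ref{def:lambdavillani-bce} nowhere uses $B_\sigma$ and survives on the derivative bounds $M_D,M_D'$ alone, and (ii) both $\E_{\mu_s}\tilde{L}$ and $\norm{\tilde{L}}_{\normltwo(\mu_s)}$ are finite, which rests on the quadratic regularizer dominating the at-most-linear growth of the logistic term so that $\mu_s$ retains integrable-enough tails. Once these are in hand, the exponential (``linear-time'') rate is immediate from the Poincar\'e constant, and the sole reason the discrete-time smoothness constant ${\rm gLip}(\tilde{L})$ does not appear --- and the reason we must pass to the SDE --- is precisely that it would be infinite for the unbounded SoftPlus.
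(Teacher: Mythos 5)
Your proposal is correct and follows essentially the same route as the paper: verify that the Villani/Poincar\'e argument of Lemma \ref{def:lambdavillani-bce} survives for SoftPlus because it rests only on Lipschitzness and the derivative bounds $M_D, M_D'$ (never on $B_\sigma$), then invoke the continuous-time results of \citet{weijie_sde} --- the exponential transient contraction (Proposition 3.1, which is your Cauchy--Schwarz step with $C(s,\tilde{L})$ the $\normltwo(\mu_s)$-size of the centered loss) and the $O(s)$ stationary bias (Proposition 3.2, your Laplace estimate) --- combined exactly as in Corollary 3.3, which is the paper's bias/transient split. The only differences are that you unpack the internals of those propositions where the paper cites them as black boxes, and one immaterial slip in your closing remark: ${\rm gLip}(\tilde{L})$ is not actually infinite for SoftPlus --- the paper's Appendix \ref{sec:smoothness} bound merely becomes vacuous because it routes through $B_\sigma$, whereas a sharper bound using $\abs{\ell'}\leq 1$ together with $\abs{\sigma''}\leq \beta/4$ would keep the smoothness constant finite; the honest reason the theorem is stated for the SDE is that the paper's quoted discrete-time machinery needs the specific gLip estimate it derived, which is unavailable for unbounded $\sigma$.
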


\begin{proof}
\quad The SoftPlus function is Lipschitz, hence using the same analysis as in (Section \ref{sec:proof_mainthm}), we can claim that for $\reg > \reg_{c}$ the loss function in Definition \ref{def:sgd} with SoftPlus activations is a Villani function (and hence confining, by definition). 

Then, from Proposition $3.1$ of \citet{weijie_sde} it follows that, $\exists ~C(s, \tilde{L})$, an increasing function of $s$, that satisfies, \[\abs{\E \tilde{L}(\W_s(t)) - \E \tilde{L}(\W_s(\infty))} \leq C(s, \tilde{L}) \norm{\rho_{\rm initial}  - \mu_s}_{\mu_s^{-1}}\, e^{-\reg_s t}.\] From Proposition $3.2$ of \citet{weijie_sde} it follows that, for any $S>0$, for $s \in (0, S)$, $\exists ~G(S, \tilde{L})$ that quantifies the excess risk at the stationary point of the SDE as, 
\[\tilde{L}(\W(\infty)) - \min_{\W}\tilde{L} \leq G(S, \tilde{L})\,s\] Combining the above, the final result claimed follows as in Corollary $3.3$ in \citet{weijie_sde}.
\end{proof}

\subsection{An Experimental Demonstration of the Maintenance of Classification Accuracy At Various Regularizations at Different Widths}\label{sec:experiments}

For further illustration of the ramifications  of the novel convergence theorems shown above, in here we present some experimental studies of doing binary classification by training depth $2$ sigmoid activated nets with the regularized loss considered in the above convergence proofs. And we will be using the normalizations that correspond to the theoretically needed threshold value of the regularizer being $\reg_c = 0.03125$ (Equation \ref{lambsigvil}). 

We sampled the data from a clearly separable dataset with a margin -- $n$ data vectors in $d-$dimensions were sampled as a  $n \times d$ normally distributed matrix whereby after sampling each row vector was normalized to have unit norm.  The data whose last coordinate were $>0.2$ were assigned label $+1$ and where the last coordinate was $<-0.2$ was assigned label $-1$ and the rest of the data were discarded. In our experimental setting, we fixed to $d = 10$. The data were split into being $20\%$ test data and the rest used for training.

Then we simulate SGD based training on the above data for multiple neural nets at various values of $\reg$ in $[0,\lambda_C]$ and for neural net widths $p$. The step-length in the experiments is constant across all widths and lambda, across all settings. 

The elements of the (trainable) weight Matrix $\mW_0$ of dimension $p \times d$ were initialized from a standard normal distribution, i.e $\mW_0 \sim \cN (0, \mI_{p \times d})$. Likewise, the elements of the (fixed) outer layer $\va$, of dimension $1 \times p$ were sampled as $\va \sim \cN (0, \mI_{1 \times p})$ and then normalized.

 %The loss function used in the code was a custom implementation of logistic loss as defined in Equation \ref{eq:loss-bce}.  

For all experimental settings the neural networks were trained for $500$ epochs, and with a mini-batch size of $256$. At the end of training we measured the test accuracy of classification - as the downstream metric of measuring the goodness of training. 

% We have used Accuracy, defined as follows
% \[ \text{Accuracy = }\frac{\text{the number of correct classification(s) on the \texttt{test\_set}}}{\text{Size of \texttt{test\_set}}}\] as a metric to measure the performance of the neural network. 

As shown in the experimental graph (Figure \ref{fig:acc-vs-lambda}), we demonstrate examples of nets trained on regularized logistic loss showing remarkable accuracy for regularization $\reg \leq \reg_c$ -- even though the model was not been exposed to the $0-1$ criteria during training. 

The code for the experiments can be found at \href{https://colab.research.google.com/drive/1mr_jRX7H6e9Yxao_6YBUbc2UhHo-RMDp?usp=sharing}{this Colaboratory File}. 
%The experimental graph is presented on the next page (Figure \ref{fig:acc-vs-lambda}).

\begin{figure}[h]
    \centering
    \includegraphics[scale=0.2]{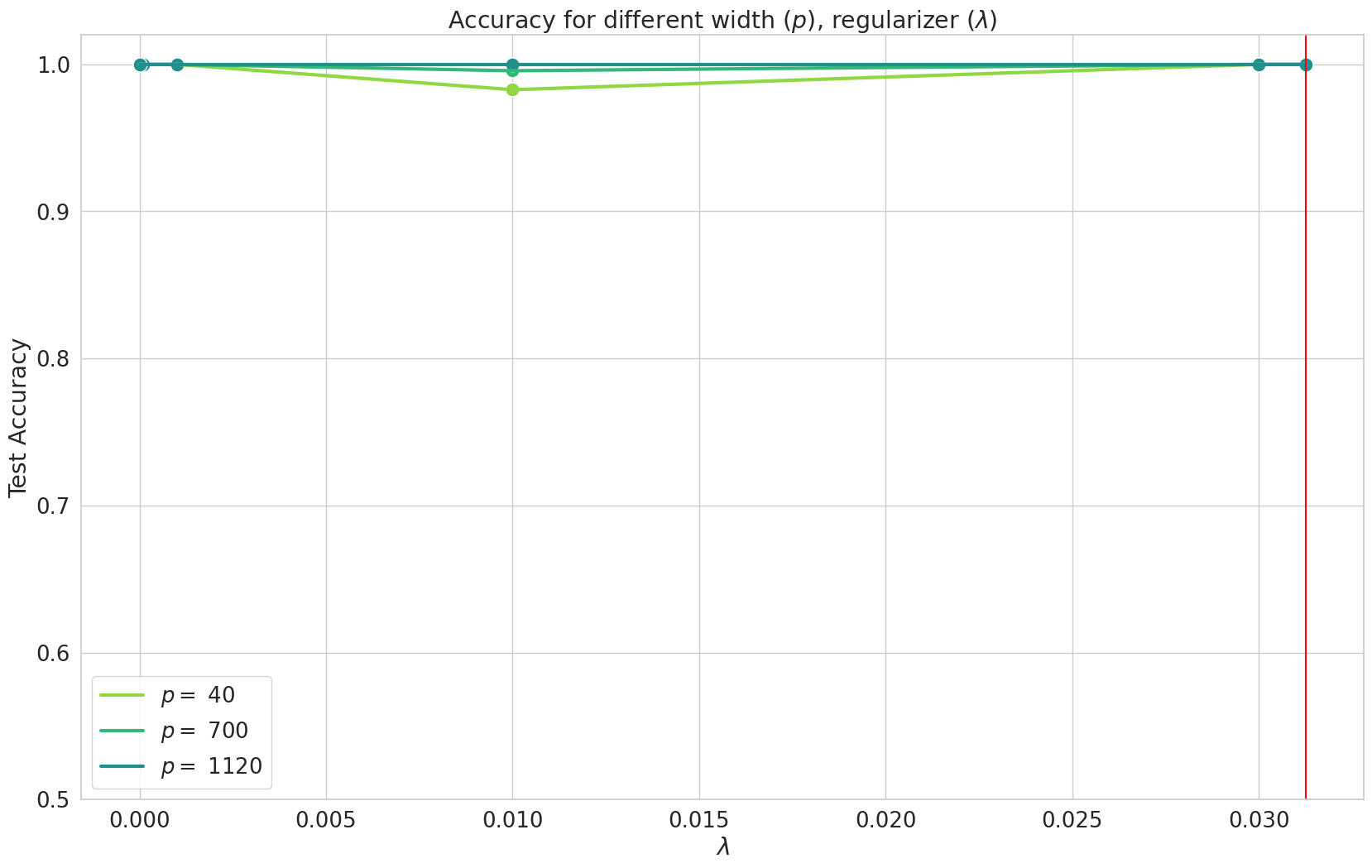}
    \caption{Test Accuracy across various widths $p$ and regularizer $\reg$}
    \label{fig:acc-vs-lambda}
\end{figure}

Thus we have demonstrated that the threshold amount of regularization that was needed for the proof of convergence may not at all harm the downstream performance metric of classification. 

For further insights, the reader can see Appendix \ref{sec:MNIST_experiments}, where additional experiments on the MNIST dataset are detailed. These experiments showcase that the regularized loss function considered in the aforestated theorems can be trained via SGD to achieve remarkable classification performance on real data too. %Notably, no linear model, in our knowledge, has demonstrated comparable accuracy, underscoring the significance of threshold regularization in enhancing model performance. This empirical evidence substantiates the necessity and effectiveness of employing threshold regularization techniques in the context of our study.

% \begin{figure}
%     \centering
%     \includegraphics{Images/fin2.png}
%     \caption{Test Loss across various widths $p$ and regularizer $\reg$}
%     \label{fig:loss-vs-lambda}
% \end{figure}
\section{Overview of \citet{weijie_sde}}\label{sec:weijierev}

In Section \ref{sec:proof_mainthm}, we will give the proof for our Theorem \ref{thm:error_bound}. As relevant background for the proof, in here we shall give a brief overview of the framework in  \citet{weijie_sde}, which can be summarized as follows : suppose one wants to minimize the function $\tilde{L}(\W) \coloneqq \frac{1}{n}\sum_{i=1}^n \tilde{L}_i(\W)$, where $i$ indexes the training data, $\W$ is in the parameter space (the optimization space) of the loss function and $\tilde{L}_i$ is the loss evaluated on the $i^{th}-$datapoint. On this objective, a constant step-size mini-batch implementation of the Stochastic Gradient Descent (SGD) consists of doing the following iterates, $\W_{k+1} = \W_k - \frac{s}{b}\sum_{i} \nabla \tilde{L}_i(\W_k)$, where the sum is over a mini-batch (a randomly sampled subset of the training data) of size $b$ and $s$ is the fixed step-length. In, \citet{weijie_sde} the authors established that over any fixed time horizon $T >0$, as $s \rightarrow 0$, the dynamics of this SGD is arbitrarily well approximated (in expectation) by the Stochastic Differential Equation (SDE),

\begin{align}\label{def:sde}
\dd{\W_s(t)} = -\nabla \tilde{L}(\W_s(t)) \dd{t} + \sqrt{s}\dd{\mB(t)} \quad \quad \text{(SGD--SDE)}
\end{align}

where $\mB (t)$ is the standard Brownian motion. We recall that the Markov semigroup operator $P_t$ for a stochastic process $X_t$ and its  infinitesimal generator $\gL$  are given as, $P_t f (x) \coloneqq \E[f(X_t) \mid X_0 = x]$ and $\gL f  \coloneqq \lim_{t\,  \downarrow\, 0}\,  \frac{P_t f - f}{t}$.

% \begin{align*}
%     P_t f (x) &\coloneqq \E[f(X_t) \mid X_0 = x] \quad \quad ; \quad \quad
%     \mathcal{L}f  \coloneqq \lim_{t\,  \downarrow\, 0}\,  \frac{P_t f - f}{t}
% \end{align*} 
% Thus for the SDE in Eq. \ref{def:sde}, $\mathcal{L}$ and its adjoint $\mathcal{L}^*$ are given as
% \begin{align*}
%     \mathcal{L}f = \frac{s}{2}\Delta f - \ip{\nabla f}{\nabla \tilde{L}} \implies
%     \mathcal{L^*}f = \ip{\nabla {f}}{\nabla \tilde{L}} + f\,  \Delta\,\tilde{L} + \frac{s}{2}\Delta f
% \end{align*}
Invoking the Forward Kolmogorov equation ${\partial_t f} = \gL^* f$, one obtains the following Fokker--Planck--Smoluchowski PDE governing the evolution of the density of the SDE, 
\begin{align}\label{def:fps} \pdv{\rho_s}{t} &= \langle \nabla \rho_s, \nabla \tilde{L} \rangle + \rho_s \Delta \tilde{L} + \frac{s}{2} \Delta \rho_s  \quad \quad \text{(FPS)}
\end{align} 
Further, under appropriate conditions on $\tilde{L}$ the above implies that the density $\rho_s(t)$ converges exponentially fast to the Gibbs' measure corresponding to the objective function i.e the distribution with p.d.f 
\begin{equation}
    \mu_s = \frac{1}{Z_s}{\exp\left(-\frac{2 \tilde{L}(\W)}{s}\right)}
    \label{def:mu}
\end{equation}
where $Z_s$ is the normalization factor. The sufficient conditions on $\tilde{L}$ that were shown to be needed to achieve this ``mixing'' and to know a rate for it, are that of $\tilde{L}$ be a ``Villani Function'' as defined below, 
% Also, \cite{weijie_sde} invoke the fact that the density of $\mW_s(t)$ given by the above SDE, say $\rho_s(t)$, evolves according to the following Fokker-Plank-Smoluchowski (FPS) PDE,

% \note{Sketch how the SDE give this PDE} 
% \newcommand{\R}{\sR}
% \newcommand{\E}{\E}
\begin{definition}[{\bf Villani Function} (\cite{villani2009hypocoercivity,weijie_sde})]\label{def:villani}
A map $f : \R^d \rightarrow \R$ is called a Villani function if it satisfies the following conditions,
\begin{enumerate}
\item  $f \in C^\infty$
\item  $\lim_{\norm{\vx}\rightarrow \infty} f(\vx) = +\infty$
\item  ${\displaystyle \int_{\R^d}} \exp\left({-\frac{2f(\vx)}{s}}\right) \dd{\vx} < \infty ~\forall s >0$
\item $\lim_{\norm{\vx} \rightarrow \infty} \left ( -\Delta f(\vx) + \frac{1}{s} \cdot \norm{\nabla f(\vx)}^2 \right ) = +\infty ~\forall s >0$
\end{enumerate} Further, any $f$ that satisfies conditions 1 -- 3 is said to be ``confining''.
\end{definition}

 From Lemma $5.2$ \citet{weijie_sde}, the empirical or the population risk, $\tilde{L}$, being confining is sufficient for the FPS PDE (equation \ref{def:fps}) to evolve the density of SGD--SDE (equation \ref{def:sde}) to the said Gibbs' measure. 

But, to get non-asymptotic guarantees of convergence -- even for the SDE (Corollary $3.3$, \cite{weijie_sde}), we need a Poincar\'e--type inequality to be satisfied (as defined below) by the aforementioned Gibbs' measure $\mu_s$. A sufficient condition for this Poincar\'e--type inequality to be satisfied is if a confining loss function $\tilde{L}$ also satisfied the last condition in definition \ref{def:villani} (and is consequently a Villani function).

\begin{theorem}[Poincar\'e--type Inequality (\cite{weijie_sde})]\label{def:lambdas}
Given a $f : \R^d \rightarrow \R$ which is a Villani Function (Definition \ref{def:villani}), for any given $s>0$, define a measure with the density, $\mu_s (\vx) = \frac{1}{Z_s}{\exp\left(-\frac{2 f(\vx)}{s}\right)}$, where $Z_s$ is a normalization factor. Then this (normalized) Gibbs' measure $\mu_s$ satisfies a Poincare-type inequality i.e $\exists ~\reg_s >0$ (determined by $f$) s.t $\forall h \in C_c^{\infty}(\sR^d)$ we have,
\[ {\rm Var}_{\mu_s}[h] \leq \frac{s}{2  \reg_s} \cdot \E_{\mu_s} [ \norm{\nabla h}^2]\] 
\end{theorem}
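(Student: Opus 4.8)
The plan is to recast the claimed Poincar\'e inequality as the existence of a spectral gap, and then to produce that gap via the ground-state (Witten) transform together with the classical spectral theory of confining Schr\"odinger operators. Associated to $\mu_s \propto \exp(-2f/s)$ is the Dirichlet form $\mathcal{E}(h,h) := \E_{\mu_s}[\norm{\nabla h}^2]$, which is the quadratic form of the weighted Laplacian $\Delta_V h = -\Delta h + \ip{\nabla V}{\nabla h}$ with $V := 2f/s$, obtained by integrating by parts against $e^{-V}$. The inequality to be proved, $\Var_{\mu_s}[h] \le \frac{s}{2\reg_s}\,\E_{\mu_s}[\norm{\nabla h}^2]$, is precisely the assertion that $\Delta_V$ (equivalently the diffusion generator $-\mathcal{L} = \tfrac{s}{2}\Delta_V$) has a strictly positive spectral gap above its lowest eigenvalue $0$, whose eigenspace is the constants. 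Everything therefore reduces to showing that the bottom of the spectrum of $\Delta_V$ on the orthocomplement of constants is bounded away from zero.

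First I would perform the unitary ground-state transform $U : L^2(\mu_s) \to L^2(\mathbb{R}^d, \dd{\vx})$, $Uh = h\,e^{-f/s}/\sqrt{Z_s}$, which is an isometry exactly because $\mu_s = e^{-2f/s}/Z_s$, and which is well-defined since $\psi_0 := e^{-f/s} \in L^2(\dd{\vx})$ by condition 3 of Definition \ref{def:villani}. A direct computation shows that $U\,\Delta_V\,U^{-1}$ equals the Schr\"odinger operator $\tilde H = -\Delta + W$ on $L^2(\dd{\vx})$, with potential
\[
W(\vx) = \frac14\norm{\nabla V}^2 - \frac12\Delta V = \frac{1}{s}\left(\frac{1}{s}\norm{\nabla f(\vx)}^2 - \Delta f(\vx)\right).
\]
Under this transform the constant functions (the kernel of $\Delta_V$) map to the ground state $\psi_0$, the unique up-to-scale nonnegative eigenfunction, at eigenvalue $0$. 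The decisive observation is that condition 4 of Definition \ref{def:villani} is exactly the statement that $\lim_{\norm{\vx}\to\infty} s W(\vx) = +\infty$, i.e. $W$ is a confining potential.

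Second, I would invoke the standard fact that a Schr\"odinger operator $-\Delta + W$ with $W$ bounded below and $W(\vx) \to +\infty$ has compact resolvent. The cleanest route is Persson's theorem, which identifies the bottom of the essential spectrum with $\lim_{R\to\infty}\inf\{\ip{\phi}{\tilde H \phi} : \phi \in C_c^\infty(\mathbb{R}^d \setminus B_R),\ \norm{\phi}_2 = 1\}$; since $W \to +\infty$ this infimum diverges, so $\sigma_{\mathrm{ess}}(\tilde H) = \varnothing$ and the spectrum is purely discrete, $0 = \kappa_0 < \kappa_1 \le \kappa_2 \le \cdots \to \infty$. Simplicity of the ground state (Perron--Frobenius for Schr\"odinger operators) gives $\kappa_1 > \kappa_0 = 0$ strictly, and this gap $\kappa_1 > 0$ is the spectral gap of $\Delta_V$. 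Setting $\reg_s := \tfrac{s}{2}\kappa_1 > 0$ then yields the asserted Poincar\'e-type inequality for every $h \in C_c^\infty(\mathbb{R}^d)$ via the variational characterization of $\kappa_1$, after first verifying it on the form core and extending by density.

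The main obstacle is the confinement/compactness step: rigorously passing from ``$W \to +\infty$'' to ``$\tilde H$ has purely discrete spectrum with $\kappa_1 > 0$''. This requires care with the self-adjoint realization of $\tilde H$ (essential self-adjointness on $C_c^\infty$, which follows from smoothness of $f$ in condition 1 and $W$ bounded below), with the Persson-type localization estimate, and with the non-degeneracy of the ground state; the integrability conditions 2--3 are exactly what guarantee $\psi_0 \in L^2$ and that $0$ is genuinely attained as the lowest eigenvalue. The remaining algebraic identity for $W$ and the density extension to all test functions are routine.
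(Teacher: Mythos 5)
Your proof is correct, but it follows a genuinely different route from the one the paper relies on. The statement is imported from \citet{weijie_sde}, and the paper's own account of it (Appendix \ref{sec:constants}) is quantitative and elementary: restrict $\mu_s$ to a ball $B_{R_s}$, apply the classical bounded-domain Poincar\'e inequality of \citet{evans_pde} to the truncated measure $\mu_{s,R_s}$, control the exterior region using the divergence of $V_s = \norm{\nabla \tilde{L}}^2/s - \Delta \tilde{L}$ through the quantity $\epsilon(R_s) = 1/\inf\{V_s : \norm{\mW}_F \ge R_s\}$, and patch the two regions to obtain the explicit constant $\reg_s = \frac{1 + 3s\left(\inf V_s\right)\epsilon(R_s)}{2\left(C(R_s) + 3\epsilon(R_s)\right)}$. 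Your route is qualitative and spectral-theoretic: Witten transform, Persson's theorem, and ground-state simplicity. Note that your Schr\"odinger potential satisfies $sW = V_s$, so both arguments pivot on exactly the same quantity, and condition 4 of Definition \ref{def:villani} enters both in the same way, namely as confinement of the Witten potential; indeed Persson's theorem is itself proved by a localization argument, so at bottom the two mechanisms coincide. What each buys: your argument is conceptually clean, avoids computation, and identifies the best possible constant --- $\reg_s = \frac{s}{2}\kappa_1$ with $\kappa_1$ the true spectral gap --- but only as an existence statement, resting on nontrivial operator-theoretic inputs (essential self-adjointness, empty essential spectrum, Perron--Frobenius) which you correctly flag and which do go through here since $W$ is smooth and bounded below and $\psi_0 \in L^2$ by condition 3. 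The paper's construction instead yields an explicit, in-principle-computable $\reg_s$, which matters downstream: $\reg_{s^*}$ appears quantitatively in the convergence rate $e^{-s^*\reg_{s^*} k}$ of Theorem \ref{thm:error_bound} and in the admissible-initialization condition $\norm{\rho_{\rm initial} - \mu_{s^*}}_{\mu_{s^*}^{-1}} \leq \frac{\epsilon}{2\, C(s^*,\tilde{L})} e^{\reg_{s^*} T}$ of Lemma \ref{thm:sgd-sig-bce}, so a purely existential gap would weaken those statements to non-constructive ones.
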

The reader can see Appendix \ref{sec:weije_addn} for a more detailed sketch of the key proof in \citet{weijie_sde} about why this SGD-SDE considered above mixes to a Gibbs's distribution for objectives being the Villani function

The approach of \citet{weijie_sde} has certain key interesting differences from many other contemporary uses of SDEs to prove the convergence of discrete time stochastic algorithms. Instead of focusing on the convergence of parameter iterates $\mW^k$, they consider the dynamics of the expected error i.e $\E [ \tilde{L}(\mW^k)]$, for  $\tilde{L}$ being the empirical or the population risk. This leads to a transparent argument for the convergence of $\E [ \tilde{L}(\mW^k)]$ to $\inf_{\mW}  \tilde{L}(\mW)$, by leveraging standard results which help one pass from convergence guarantees on the SDE to a convergence of the SGD. 
 
We note that \citet{weijie_sde} achieve this conversion of guarantees from SDE to SGD by additionally assuming gradient smoothness of $\tilde{L}$ -- and we would show that this assumption holds for the natural neural net loss functions that we consider. 

\section{Proof of Theorem \ref{thm:error_bound}}\label{sec:proof_mainthm} 
\begin{proof}
\quad Note that $\tilde{L}$ being a confining function can be easily read off from Definition \ref{def:villani}. Further, as shown in Appendix \ref{sec:villani_condition}, the following inequalities hold,

\begin{align}
    \nonumber   \norm{\nabla_{\mW} L \left( \mW \right)}^2 &\geq \left( \reg^2 -  \frac{\reg \norm{\va}^2_2 M_D B_x^2 L }{2}\right) \norm{\mW}^2_F  -  \reg \norm{\mW}_F \norm{\va}_2 M_D B_x \left(1 + \frac{
    \norm{\va}_2 \norm{\vc}_2 }{2} \right)\\
\Delta_{\mW\mW}\tilde{L} &\leq p\left[M_d^2 B_x^2 \norm{\va}_2^2 + {\norm{\va}_2}\left[ \left(B_y + \norm{\va}_2\left(\norm{\c}_2 + LB_x\norm{\mW}_F\right)\right)\left(M_D'B_x^2\right)\right] + \reg d \right]
\end{align}

Combining the above two inequalities we can conclude that, $\exists$ functions $g_1, g_2, g_3$ such that,\begin{align*}
\frac{1}{s} \norm{\nabla_{\mW}\tilde{L}}^2 - \Delta_{\mW\mW
}\tilde{L} &\geq g_1(\reg, s) \norm{\mW}_F^2 - g_2(\reg, s) \norm{\mW}_F + g_3(\reg, s)
\end{align*} where in particular,
\[g_1(\reg, s) = \reg^2-2\reg \cdot  M_D L B_x^2 \norm{\va}_2^2.\] Hence we can conclude that  for $\reg > \reg_{c} \coloneqq 2 M_D L B_x^2 \norm{\va}_2^2, \forall s > 0,$ $\frac{1}{s} \norm{\nabla_{\mW}\tilde{L}}^2 - \Delta_{\mW\mW
}\tilde{L}$ diverges as $\norm{\mW} \rightarrow +\infty$, since $g_1(\reg, s) > 0.$ The key aspect of the above analysis being that the bound on $\Delta_{\mW\mW}$ does not 
 depend on $\norm{\mW}_F^2.$ 
%\mynote{Need to edit the W-norms above}
Thus we have, that the following limit holds, \begin{align*}   \lim_{\norm{\mW}_F \rightarrow +\infty} \left(\frac{1}{s} \norm{\nabla_{\mW}\tilde{L}}^2 - \Delta_{\mW\mW
}\tilde{L}\right) = +\infty
\end{align*} for the range of $\reg$ as given in the theorem, hence proving that $\tilde{L}$ is a Villani function. 

Towards getting an estimate of the step-length as given in the theorem statement, we also show in Appendix \ref{sec:smoothness}  that the loss function $\tilde{L}$ is gradient--Lipschitz with the smoothness coefficient being upperbounded as,  \[ \textup{gLip}(\tilde{L}) \leq \sqrt{p}\left(\frac{ \sqrt{p} \norm{\va}_2 M_D^2 B_x}{4} + \left( \frac{2 + \norm{\vc}_2 + \norm{\va}_2 B_{\sigma}}{4} \right)M'_D B_x p + \reg \right)\] Now we can invoke Theorem $3$ (Part 1), \cite{weijie_sde} with appropriate choice of $s$, as given in the theorem statement  to get the result as given in Theorem \ref{thm:error_bound}.  In Appendix \ref{sec:constants}  one can find a discussion of the computation of the specific constants involved in the expression for the suggested step-length $s^*$. %and the class of initial weight distribution p.d.fs $\rho_{\rm initial}$. 
\end{proof} 

\section{Conclusion}\label{sec:conc} 

% UNRESOLVED
% {\color{red} (will add more refs related to this)} 
To the best of our knowledge, in this work, we have shown the first proof of convergence of SGD to the global minima of logistic loss on a neural net whilst not making any assumptions about the data or the width of the net. Our result relied on the convergence of discrete-time algorithms like SGD to their continuous time counterpart (SDEs) - a theme that has lately been an active field of research. 

We recall that \citet{telgarsky_ji} is among the most related works to this as it shares with us the same SGD algorithm, primary loss function, neural architecture, and data labels. Their proof exploits specific properties of the (leaky) ReLU like positively homogeneous activation function and is limited to asymptotically wide networks. In contrast, our convergence result (Lemma \ref{thm:sgd-sig-bce}) applies to activation functions like tanh and sigmoid not covered by them and most importantly accommodates arbitrary data and network widths. But to achieve these flexibilities we needed to exploit a parametric (in data norm and last layer norm) amount of regularization and impose certain constraints on the distribution from which the initial weights are sampled.

By juxtaposing these two results, multiple intriguing research directions for the immediate future are revealed - which can significantly advance our understanding of the provable training of neural networks. Specifically, we advocate for further investigation into reformulating the constraints on the initial weight distribution in more intuitive terms. It could also be promising to explore whether natural weight initialization schemes adhere to the sufficient criteria we require.  We also point out that experiments suggest that convergence as we prove can happen at lower values of regularization (possibly in a data-dependent way) and hence that suggests trying for tighter analysis to show that neural losses can be Villani functions.

%This opens up possibilities for refining and extending .

 % We recall that the work of Jin-Telgarsky \citep{telgarsky_ji} is the most immediate counterpart of the proof that we have demonstrated, since they used the same SGD algorithm, primary loss function, neural architecture to train and the labels as us but their proof works by exploiting specific details of the activation being ReLU and works only for asymptotically  wide nets. In contrast, we are able to get a convergence proof which works for multiple activations (like tanh and sigmoid) and for arbitrary data and width of the net - but we require some some regularization and some constraints on the initial distribution from which the weights are sampled. Thus by contrasting the two results we posit that an important direction of research arises which is to be able to either reformulate our constraints on the initial distribution in more intuitive terms or to be able to verify that some of the natural initialization schemes satisfy our criteria.
%In a recent notable progress in \cite{sanjeev_svag}, an iterative algorithm called SVAG (Stochastic Variance Amplified Gradient) was introduced as a quantifiably good discretization of another SGD motivated SDE than what we use here. They show that SVAG iterates converge in expectation to the covariance--scaled It\^{o} SDE. And they gave empirical evidence that this discretization and their SDE are also tracking SGD on real nets. It remains to be explored in future if this can become a pathway towards better guarantees on neural training than what we get here.  

 We posit that trying to reproduce our Theorem \ref{thm:sgd-sig-bce} using a direct analysis of the dynamics of SGD could be a fruitful venture leading to interesting insights. Our results also motivate a new direction of pursuit in deep-learning theory, centered around understanding the nature of the Poincar\'e constant of Gibbs' measures induced by neural net losses.

 Our experiments further demonstrated that there exists neural nets and binary class labelled data such that optimizing on our provably good smooth loss functions also does highly accurate classification. This motivates a fascinating direction of future pursuit about the phenomenon of classification calibration or Bayes consistency i.e to be able to analytically identify cases where convergence guarantees as given here could be extended to guarantees on the classification error. 

 Lastly, given the new method of proving neural training by SGD that has been initiated in this work, it naturally begets the question if these proof techniques could also resolve similar mysteries for more exotic loss functions and neural architectures that are in use. 
 
\bibliographystyle{tmlr.bst}
\bibliography{References}

\appendix
\section{Towards Establishing the Villani condition for the Empirical Logistic Loss}\label{sec:villani_condition}

%\subsection{\texorpdfstring{Determining $\reg_c$}{Determining lambda}} % (fold)
%\label{sec:lambda}

We start with the following observation,
\begin{lemma}\label{lem:grad}
Letting $\vw_j$ denote the $j^{\rm th}$ row of $\mW$ we have, 
   \[  \nabla_{\vw_j} f_i \left(\mW\right) = a_j \sigma'\left( \vw_j^\top \bm{x}_i \right) \vx_i \]
\end{lemma}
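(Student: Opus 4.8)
The plan is to compute the gradient directly from the definition of the network output, exploiting the row-wise separability of the preactivations. Recall from Definition \ref{def:sgd} that $f(\vx;\va,\mW) = \va^\top \sigma(\mW\vx)$, where $\sigma$ is applied elementwise. Writing $\mW$ in terms of its rows $\vw_1,\ldots,\vw_p$, the $k$-th coordinate of $\mW\vx$ is exactly $\vw_k^\top\vx$, so evaluating on the $i$-th datapoint gives
\[
f_i(\mW) = f(\vx_i;\va,\mW) = \sum_{k=1}^p a_k\,\sigma\!\left(\vw_k^\top\vx_i\right).
\]

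First I would observe that, among the $p$ summands, only the term indexed by $k=j$ depends on the row $\vw_j$; every other summand is constant with respect to $\vw_j$ and therefore contributes nothing to $\nabla_{\vw_j} f_i$. This follows because distinct rows of $\mW$ enter $f_i$ through disjoint coordinates of $\mW\vx_i$, and it reduces the whole computation to differentiating the single scalar map $\vw_j \mapsto a_j\,\sigma(\vw_j^\top\vx_i)$.

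Next I would apply the chain rule to this scalar composition. The inner affine map $\vw_j \mapsto \vw_j^\top\vx_i$ has gradient $\vx_i$, and differentiating the outer function $\sigma$ (which is at least once differentiable by the standing assumptions collected in Definition \ref{def:sigma}) produces the scalar factor $\sigma'(\vw_j^\top\vx_i)$. Pulling out the constant $a_j$ then yields
\[
\nabla_{\vw_j} f_i(\mW) = a_j\,\sigma'\!\left(\vw_j^\top\vx_i\right)\vx_i,
\]
which is precisely the claimed identity.

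I expect no genuine obstacle here: the statement is a direct consequence of the separable structure of the preactivations together with the chain rule, and it requires only the differentiability of $\sigma$, which the activation assumptions guarantee. The sole point that merits an explicit sentence is the vanishing of the off-diagonal terms $k\neq j$, since this is what localizes the gradient to the single active neuron and delivers the clean expression above.
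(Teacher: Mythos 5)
Your proof is correct: the row-wise separability of the preactivations plus the chain rule is exactly the computation behind this identity, which the paper itself states as an unproved observation. Nothing further is needed.
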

In the following, $\norm{\mW}$ for a matrix $\mW$ denotes its spectral or operator norm. We recall that the  regularized logistic loss for training the given neural net on data $\sD = \{(\vx_i, y_i)\}_{i=1}^n$ is,
\[\tilde{L}(\mW) \coloneqq  \frac{1}{n}\sum_{i=1}^n \tilde{L}_i(\mW) + R_{\reg}({\mW}) \coloneqq \frac{1}{n}\sum_{i=1}^n \left \{ \frac{1}{2} \left(y_i - f(\vx_i; \va,\mW) \right)^2 \right \} + \frac{\reg}{2} \norm{\mW}_F^2 \]
Explicitly stated, the SGD iterates with step-length $s >0$ that we analyze on the above loss are, 
\begin{align*}
\mW^{k+1} &= \mW^k - s \left[\frac{1}{b}\sum_{i \in \gB_k} \nabla \tilde{L}_i(\mW^k) + \nabla R_{\reg}({\mW^k})\right]\\
&= \mW^k - \frac{s}{b}\sum_{i \in \gB_k}  \Big ( - \left(y_i - f(\vx_i; \va,\mW^k) \right) \cdot \nabla_{\mW^k}    f(\vx_i; \va,\mW^k) \Big ) -  s\reg \mW^k \\
&= (1 - s \reg) \mW^k + \frac{s}{b} \sum_{i \in \gB_k} \left [ (y_i - f(\vx_i; \va,\mW^k)) \cdot \nabla_{\mW^k}  f(\vx_i; \va,\mW^k)  \right]
\end{align*}

We also note the following observation, 

\begin{lemma}\label{eq:exp_ineq}
  \begin{equation}
    \frac{1}{1+e^{y_i f_i \left( \mW \right)}} \leq \frac{1}{2} + \frac{\abs{f_i (\mW)}}{4}
  \end{equation}
\end{lemma}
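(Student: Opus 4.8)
The plan is to reduce the claimed two-sided inequality to a one-dimensional Lipschitz estimate for the logistic/sigmoid function. First I would set $z \coloneqq y_i f_i(\mW) \in \R$ and observe that, because $y_i \in \{+1,-1\}$, we have $\abs{z} = \abs{y_i}\cdot\abs{f_i(\mW)} = \abs{f_i(\mW)}$. Hence it suffices to establish the purely scalar inequality
\[
\frac{1}{1+e^{z}} \leq \frac{1}{2} + \frac{\abs{z}}{4} \qquad \text{for all } z \in \R,
\]
and then substitute $\abs{z} = \abs{f_i(\mW)}$ at the end.

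Next I would introduce $g(z) \coloneqq \frac{1}{1+e^{z}}$ and note the anchor value $g(0) = \frac{1}{2}$, which already matches the constant term on the right-hand side. A direct differentiation gives $g'(z) = -\frac{e^{z}}{(1+e^{z})^2} = -\,\sigma(z)\bigl(1-\sigma(z)\bigr)$, where $\sigma(z) = \frac{e^{z}}{1+e^{z}}$. Since $\sigma(z)$ and $1-\sigma(z)$ are nonnegative reals summing to $1$, the AM--GM inequality yields $\sigma(z)\bigl(1-\sigma(z)\bigr) \leq \frac{1}{4}$, so $\abs{g'(z)} \leq \frac{1}{4}$ for every $z$; that is, $g$ is $\tfrac14$-Lipschitz. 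By the fundamental theorem of calculus,
\[
\abs{g(z) - g(0)} = \abs*{\int_0^{z} g'(t)\,dt} \leq \frac{1}{4}\abs{z},
\]
and dropping the lower bound gives $g(z) \leq g(0) + \frac{1}{4}\abs{z} = \frac{1}{2} + \frac{\abs{z}}{4}$, which is exactly the scalar inequality. Substituting $\abs{z} = \abs{f_i(\mW)}$ completes the argument.

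I do not expect any genuine obstacle here: the statement is an elementary estimate, and the entire content is the two observations that $\abs{y_i} = 1$ (so $y_i$ can be absorbed) and that the sigmoid derivative is globally bounded by $\tfrac14$. The only point requiring a moment's care is the uniform derivative bound $\sigma(1-\sigma) \leq \tfrac14$; everything else is the mean value theorem applied to a $\tfrac14$-Lipschitz function anchored at $z=0$. It is worth recording that this bound is precisely what makes the factor $\frac{1}{1+e^{y_i f_i(\mW)}}$ — which arises as the derivative of $\tilde{L}_i$ with respect to $f_i$ — controllable by $\abs{f_i(\mW)}$, so this lemma feeds directly into the subsequent gradient and Laplacian bounds needed for the Villani-function argument.
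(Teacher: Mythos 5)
Your proof is correct and follows essentially the same route as the paper's: both reduce the claim to a scalar estimate for the logistic function and exploit the derivative bound $\frac{e^{z}}{(1+e^{z})^2} \leq \frac{1}{4}$ together with the fundamental theorem of calculus anchored at $g(0) = \frac{1}{2}$. The only cosmetic difference is that the paper first passes to the worst case $\frac{1}{1+e^{y_i f_i(\mW)}} \leq \frac{e^{\abs{f_i(\mW)}}}{1+e^{\abs{f_i(\mW)}}}$ and works on $[0,\infty)$, whereas you handle both signs at once via the Lipschitz bound on all of $\R$.
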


\begin{proof}
\begin{equation*}
    \frac{1}{1+e^{y_i f_i \left( \mW \right)}} \leq \frac{1}{1+e^{-\abs{f_i \left( \mW \right)}}} = \frac{e^{\abs{f_i \left( \mW \right)}}}{1+e^{\abs{f_i \left( \mW \right)}}}
\end{equation*}

Consider the function $g(z) = \frac{e^z}{1+e^z},  \ z  \in [0, \infty)$
\begin{equation*}
  g(0) = \frac{1}{2} , \ \  g'(z) = \frac{e^z}{(1+e^z)^2} \leq \frac{1}{4}
\end{equation*}
Hence, we have 
\begin{equation*}
    g(z) - g(0) = \int_0^z g'(z) \leq \int_0^z \frac{1}{4} = \frac{z}{4}
\end{equation*}
Therefore, 
\begin{equation}
    g(z) \leq \frac{1}{2} + \frac{z}{4}
\end{equation}
\end{proof}

\subsection*{Lower-bounding the Norm of the Gradient of the Empirical Logistical Loss}\label{sec:normgrad}
Recall that we are using the following empirical loss function, 
\begin{equation}
L \left( \mW \right) = \frac{1}{n} \sum_{i=1}^n \ell \left( y_i f_i \left( \mW \right)\right) + \frac{\reg}{2}  \norm{\mW}^2_2
\end{equation}
where $\ell \left( z \right) = \log \left( 1+e^{-z} \right)$ is the logistic loss function. It follows that,
\begin{equation}\label{eq:loss_grad}
  \nabla_{\vw_j} L \left( \mW \right)= \frac{1}{n} \sum_{i=1}^n \ell' \left( y_i f_i \left( \mW \right)\right) y_i \nabla_{\vw_j} f \left( \mW \right) + \reg \vw_j 
\end{equation}
Therefore, 
\begin{equation*}
    \nabla_{\vw_j} L \left( \mW \right)= \frac{1}{n}\sum_{i=1}^n \frac{- y_i}{1+e^{y_i f_i(\mW)}} \nabla_{\vw_j} f \left( \mW \right) + \reg \vw_j 
\end{equation*}
And that implies, 
\begin{equation}
  \begin{split}
      \norm{\nabla_{\vw_j} L \left( \mW \right)}_2^2 &= \reg^2 \norm{\vw_j}_2^2 - 2 \left< \reg \vw_j, \ \frac{1}{n}\sum_{i=1}^n \frac{ y_i}{1+e^{y_i f_i(\mW)}}  \nabla_{\vw_j} f \left( \mW \right)\right> + \norm{\frac{1}{n}\sum_{i=1}^n \frac{- y_i}{1+e^{y_i f_i(\mW)}} \nabla_{\vw_j} f \left( \mW \right)}^2 \\
      &\geq \reg^2 \norm{\vw_j}_2^2  - 2 \left< \reg \vw_j, \frac{1}{n}\sum_{i=1}^n \frac{ y_i}{1+e^{y_i f_i(\mW)}}  \nabla_{\vw_j} f \left( \mW \right)\right> \\ 
      &\geq \reg^2 \norm{\vw_j}^2_2 - 2 \reg \norm{\vw_j}_2 \left( \frac{1}{n} \sum_{i=1}^n \frac{\norm{ \nabla_{\vw_j} f_i \left(\mW\right)}}{1+e^{y_i f_i(\mW)}}\right) \\
      &\geq \reg^2 \norm{\vw_j}^2_2 - 2 \reg \norm{\vw_j}_2 \left( \frac{1}{n} \sum_{i=1}^n \frac{\abs{a_j} \abs{\sigma'\left( \vw_j^\top \bm{x}_i \right)} \norm{\bm{x_i}}_2}{1+e^{y_i f_i(\mW)}}\right)
  \end{split}
\end{equation}
In the last line above we have invoked Lemma \ref{lem:grad}. Now invoking Lemma \ref{eq:exp_ineq} and recalling the definition of $M_D$ and $B_x$ in the above, we can claim that,
\begin{equation*}
  \begin{split}
       \norm{\nabla_{\vw_j} L \left( \mW \right)}_2^2 &\geq \reg^2 \norm{\vw_j}^2_2 - 2 \reg \norm{\vw_j}_2 \abs{a_j} M_D \norm{\vx_i} \left( \frac{1}{n} \sum_{i=1}^n \frac{1}{2} + \frac{\abs{f_i \left( \mW \right)}}{4} \right) \\
       &\geq\reg^2 \norm{\vw_j}^2_2 - 2 \reg \norm{\vw_j}_2 \abs{a_j} M_D B_x \left(\frac{1}{2} + \frac{\norm{\va}_2 (L B_x \norm{\mW}_2 + \norm{\vc}_2 )}{4} \right)\\
  \end{split}
\end{equation*}
Summing the above over all $j$ and using Cauchy-Schwartz inequality, $\sum_{j=1}^p \norm{\vw_j}_2 \cdot \abs{a_j} \leq \norm{\mW}_F \cdot \norm{\va}$ and $\norm{\mW}_2 \leq \norm{\mW}_F$, we get,
\begin{equation}
  \begin{split}
    \norm{\nabla_{\mW} L \left( \mW \right)}^2\geq \left( \reg^2 - \frac{\reg \norm{\va}^2_2 M_D B_x^2 L }{2}\right) \norm{\mW}^2_F -  \reg \norm{\mW}_F \norm{\va}_2 M_D B_x \left(1 + \frac{
    \norm{\va}_2 \norm{\vc}_2 }{2} \right)
  \end{split}
\end{equation}
%\note{ Shouldn't the last term on the RHS above have a $\norm{\va}^2$?}
\subsection*{Analyzing the Laplacian of the Empirical Logistic Loss}\label{sec:lap-bce}

We begin with observing that, 
\begin{equation*}
\begin{split}
\abs{\nabla_{\vw_j} \cdot \left(\nabla_{\vw_j} L \left( \mW \right) \right)} &=  \abs{\nabla_{\vw_j} \cdot \left(\frac{1}{n} \sum_{i=1}^n \ell' \left( y_i f_i \left( \mW \right)\right) y_i \nabla_{\vw_j} f \left( \mW \right) \right) + \reg d} \\
    &\leq \abs{\frac{1}{n} \sum_{i=1}^n \ell'' \left(f_i \left( \mW \right)\right) \norm{\nabla_{\vw_j}f \left( \mW \right)}_2^2} + \abs{\frac{1}{n} \sum_{i=1}^n \ell' \left( y_i f_i \left( \mW \right)\right) y_i \Delta_{\vw_j} f \left( \mW \right)}+ \reg d
\end{split}
\end{equation*}
In above we have defined, $\Delta_{\vw_j} f = \nabla_{\vw_j} \cdot (\nabla_{\vw_j} f)$
We recall that, $\ell''(z) = \frac{e^z}{(1+e^z)^2} \leq \frac{1}{4}$, as show in Lemma \ref{eq:exp_ineq}. Further by invoking Lemma \ref{lem:grad}, the definition of $M_D$ and $B_x$ we have, 
\begin{equation}
\begin{split}
  \abs{\frac{1}{n} \sum_{i=1}^n \ell'' \left(y_i f_i \left( \mW \right)\right) \norm{\nabla_{\vw_j}f \left( \mW \right)}_2^2} &\leq \abs{\frac{1}{n} \sum_{i=1}^n \frac{1}{4} \norm{\nabla_{\vw_j}f \left( \mW \right)}_2^2}\\
  &\leq \abs{\frac{1}{n} \sum_{i=1}^n \frac{1}{4} \abs{a_j}^2 (\sigma'(\vw_j^\top x_i))^2\norm{\vx_i}^2}\\
  &\leq \frac{M_D^2 B_x^2 \norm{\va}_2^2 }{4} 
\end{split}
\end{equation}
%\note{Shouldn't the above have a $M_D^2$?}
%Which is a constant for with respect to $\norm{\mW}_F$. It is important to note that $ \forall j, a_j \in (Unif(+1, -1))$ and, 
Recalling the definition of $M_D'$ we have,
\begin{equation}
  \begin{split}
    \abs{\frac{1}{n} \sum_{i=1}^n \ell' \left( y_i f_i \left( \mW \right)\right) y_i \Delta_{\vw_j} f \left( \mW \right)} &\leq  \abs{\frac{1}{n} \sum_{i=1}^n  \left(\frac{1}{2} + \frac{\abs{f_i \left(\mW \right) }}{4} \right)y_i \Delta_{\vw_j} f \left( \mW \right)} \\
    &\leq \left(\frac{2 + \norm{c}_2}{4} + \frac{\norm{\va}_2L B_x \norm{\mW}_F }{4} \right) B_{x}^{2} M_D' \norm{\va}_2
  \end{split}
\end{equation}

And hence, 
\begin{equation}
    \abs{\nabla_{\vw_j} \cdot \left(\nabla_{\vw_j} L \left( \mW \right) \right)} \leq \left(\frac{2 + \norm{c}_2}{4} + \frac{ \norm{\va}_2  L B_x \norm{\mW}_F }{4} \right) B_{x}^{2} M_D' \norm{\va}_2 + \frac{M_D^2 B_x^2 \norm{\va}_2^2}{4} + \reg d
\end{equation}
%\note{The second term on the RHS above would have a $M_D^2$?}
Summing over all j, we get, 
\begin{equation}
\Delta_{\mW} L \left(\mW \right) \leq \sum_{j = 1}^p \abs{\nabla_{\vw_j} \cdot \left(\nabla_{\vw_j} L \left(\mW \right) \right )} \leq p \left[\left(\frac{2 + \norm{c}_2}{4} + \frac{ \norm{\va}_2  L B_x \norm{\mW}_F }{4} \right) B_{x}^{2} M_D' \norm{\va}_2 + \frac{M_D^2 B_x^2 \norm{\va}_2^2}{4} + \reg d\right]
\end{equation}

\section{Bounding the Gradient Lipschitzness Coefficient of the Empirical Logistic Loss}\label{sec:smoothness}
Towards the upcoming computation, we define the following function, 
\begin{equation}\label{eq:g_func}
  g_j \left( \mW \right) \coloneqq \nabla_{\vw_j} L = \frac{1}{n} \sum_{i=1}^n \ell' \left( y_i f_i \left( \mW \right)\right) y_i \nabla_{\vw_j} f_i \left( \mW \right) + \reg \vw_j 
\end{equation}
Consequently, corresponding to two values of the weight matrix $\mW_1$ and $\mW_2$, we obtain that,
\begin{equation}
\begin{split}
      &\norm{g_j \left( \mW_2 \right) -   g_j \left( \mW_1 \right) }_2 = \\ 
      &\norm{ \frac{1}{n} \sum_{i=1}^n y_i \left( \ell' \left( y_i f_i \left( \mW_{2} \right)\right) \nabla_{\vw_{2,j}} f_i \left( \mW_{2} \right) - \ell' \left( y_i f_i \left( \mW_1 \right)\right)  \nabla_{\vw_{1,j}} f_i \left( \mW_1 \right) \right) + \reg \vw_{2,j} - \reg \vw_{1,j}}_2\\
      &\leq  \norm{ \frac{1}{n} \sum_{i=1}^n y_i \left( \ell' \left( y_i f_i \left( \mW_2 \right)\right) \nabla_{\vw_{2,j}} f_i \left( \mW_2 \right) - \ell' \left( y_i f_i \left( \mW_1 \right)\right)  \nabla_{\vw_{1,j}} f_i \left( \mW_1 \right) \right)}_2 + \norm{\reg \vw_{2,j} - \reg \vw_{1,j}}_2\\
      &\leq  \frac{B_x \norm{\va}_2}{n} \sum_{i=1}^n \abs{\ell' \left( y_i f_i \left( \mW_2 \right)\right) \sigma'(\vw^\top_{2,j} \vx_i)  - \ell' \left( y_i f_i \left( \mW_1 \right)\right)  \sigma'(\vw^\top_{1,j} \vx_i) }+ \norm{\reg \vw_{2,j} - \reg \vw_{1,j}}_2  \\
\end{split}
\end{equation}
In the last line above we have invoked Lemma \ref{lem:grad} and the fact that $y_i \in \{1,-1\}$ for $i=1,\ldots,n$.
%\note{How can the $y_i$s go missing in the last line above? There is a sum over $i$ and there is a 2-norm being taken of the sum. }
Hence, the problem simplifies to finding the Lipschitz constant of $\ell' \left( y_i f_i \left( \mW \right)\right) \sigma'(\vw^\top_{j} \vx_i)$ 
Define $h_k $ as follows:
\begin{equation}
  \vh_k (\mW) := \nabla_{\vw_k} \left(\ell' \left( y_i f_i \left( \mW \right)\right) \sigma'(\vw^\top_{j} \vx_i) \right)
\end{equation}
%\note{Anirbit : yet to check very carefully below this}
\begin{equation*}
\begin{split}
    \norm{\vh_k (\mW)}_2 &= \norm{\nabla_{\vw_k} \left( \ell' \left( y_i f_i \left( \mW \right)\right) \sigma'(\vw^\top_{j} \vx_i) \right)  }_2\\
    & = \norm{a_k \ell'' \left( y_i f_i \left( \mW \right)\right) \sigma'(\vw^\top_{j} \vx_i)\sigma'(\vw^\top_{k} \vx_i) \vx_i + \bm{1}_{k=j} \ell' \left( y_i f_i \left( \mW \right)\right) \sigma''(\vw^\top_{j} \vx_i) \vx_i}_2 \\
    & \leq \norm{a_k \ell'' \left( y_i f_i \left( \mW \right)\right) \sigma'(\vw^\top_{j} \vx_i)\sigma'(\vw^\top_{k} \vx_i) \vx_i} + \norm{\bm{1}_{k=j} \ell' \left( y_i f_i \left( \mW \right)\right) \sigma''(\vw^\top_{j} \vx_i) \vx_i}_2 \\
    &\leq \frac{\norm{\va}_2 M_D^2 B_x}{4} + \norm{\bm{1}_{k=j} \left(\frac{1}{2} + \frac{\abs{f_i(\mW)}}{4}\right) \sigma''(\vw^\top_{j} \vx_i) \vx_i }_2\\
    & \leq\frac{\norm{\va}_2 M_D^2 B_x}{4} + \left( \frac{2 + \norm{c}_2}{4} + \frac{\norm{\va}_2 B_{\sigma} }{4} \right)M'_D B_x \sqrt{p} = L_{\rm prod}
\end{split}
\end{equation*}

where in the second term the $\sqrt{p}$ factor comes in from using Cauchy-Schwarz inequality. We concatenate these functions along the indices $k=1,2,\ldots,p$, to get
\[\vh(\mW) \coloneqq \nabla_{\mW}\left[f(\vx_i;\, \va, \mW)\sigma'(\vw_{j}^\top\vx_i)\right] = \left[\h_1(\mW), \h_2(\mW),\ldots,\h_p(\mW) \right]\]
And hence, 
\[
\norm{\vh(\mW)}_2 \leq \sqrt{p} L_{\rm prod}
\]
Therefore, the Lipschitz constant of $\ell' \left( y_i f_i \left( \mW \right)\right) \sigma'(\vw^\top_{j} \vx_i)$ is, 
\begin{equation*}
  \frac{\norm{\va}_2 M_D^2 B_x \sqrt{p}}{4} + \left( \frac{2 + \norm{c}_2}{4} + \frac{\norm{\va}_2 B_{\sigma} }{4} \right)M'_D B_x p
\end{equation*}
Hence the Lipschitz constant of $g_j \left(\mW\right)$ is
\begin{equation*}
  \frac{\norm{\va}^2_2 M_D^2 B_x^2 \sqrt{p}}{4} + \left( \frac{2 + \norm{c}_2}{4} + \frac{\norm{\va}_2 B_{\sigma} }{4} \right)M'_D B_x^2 \norm{\va}_2 p + \reg
\end{equation*}

Proceeding as in the case of $\vh(\mW)$ above, we now  concatenate the above gradients $\vg_j$ w.r.t the index $j$ in a vector form (of dimension $pd$) to get the following $pd-$dimensional gradient vector of the empirical loss, \[\nabla_{\mW}\tilde{L} = \vg(\mW) \coloneqq \left[\vg_1(\mW),\vg_2(\mW),\ldots,\vg_p(\mW)\right]\] and the Lipschitz constant of $\vg$ - and hence the gradient Lipschitz constant for $\tilde{L}$ to be bounded as,

\begin{equation*}
       \textup{gLip}(\tilde{L}) \leq \sqrt{p}\left(\frac{ \sqrt{p} \norm{\va}_2 M_D^2 B_x}{4} + \left( \frac{2 + \norm{c}_2}{4} + \frac{\norm{\va}_2 B_{\sigma}}{4} \right)M'_D B_x p + \reg \right)
\end{equation*}

Thus we get the expression as required in the proof in Section \ref{sec:proof_mainthm}.

\section{Defining the Constants $C(s,\tilde{L})$ and $\lambda_s$ of Theorem \ref{thm:error_bound} }\label{sec:constants}
%\subsection{}\label{constants}

Invoking Theorem 3 from \citet{weijie_sde} with a ``time horizon'' parameter $T>0$, $f = \tilde{L}$, the convergence guarantee for running $k$ steps of SGD at a step-size $s$ as given in Theorem \ref{thm:error_bound} for $k\cdot s \in (0,T)$ and $s \in (0, 1/{\rm gLip}(\tilde{L}))$ can be given as, 

\begin{align}\label{def:mainshi}
\E\tilde{L}(\mW^k) - \inf_{\mW}\tilde{L}(\mW)\leq (A(\tilde{L}) + B(T,\tilde{L}))s + C(s, \tilde{L}) \norm{\rho - \mu_s}_{\mu_s^{-1}} e^{-s\reg_s k}
\end{align}

where $A, B, C$ are constants as mentioned in \citet{weijie_sde}. In particular, the $C$ therein was defined as follows
\[C(s,\tilde{L}) = \left(\int_{\sR^{p\times d}} (\tilde{L}(\mW) - \min_{\mW}\tilde{L}(\mW))^2 \mu_s (\W) \, d\mW\right)^{1/2}\]

for $\mu_s(\mW)$ being the Gibbs' measure, $\mu_s(\mW) = \frac{1}{Z_{s}}\exp\left({-\frac{2\tilde{L}(\mW)}{s}}\right)$ with $Z_{s}$ being the normalization factor. 

For determining $\reg_s$, \citet{weijie_sde} consider the function $V_s(\mW) = \norm{\nabla \tilde{L}}^2 / s - \Delta \tilde{L}.$ Let $R_{0,s} > 0$ be large enough such that $V_s(\mW) > 0$ for $\norm{\mW}_F \geq R_{0,s}.$ For $R_s > R_{0,s},$ \citet{weijie_sde} define $\epsilon(R_s)$ as \[\epsilon(R_s) = \frac{1}{\inf\{V_s(\mW) \, : \, \norm{\mW}_F \geq R_s\}}\] where $R_s$ is assumed large enough such that $\int_{\norm{\mW}_F \leq R_s} d\mu_s \geq 1/2.$ 
For $B_{R_s}$ as the ball of radius ${R_s}$ centered at origin in $\sR^{p \times d}$, \citet{weijie_sde} define \[\mu_{s,R_s} = \left[\int_{\norm{\mW}_F\leq R_s} d\mu_s(\mW) \right]^{-1} \mu_s(\mW) \mathbf{1}_{\norm{\mW}_F\leq R_s}.\] Using the Poincar\'e inequality in a bounded domain[\cite{evans_pde}, Theorem 1, Chapter 5.8], \citet{weijie_sde} define the constant $C(R_s)$ to be s.t the the following holds $\forall h \in C_c^{\infty}(\sR^d)$, 
\[\int_{\mW \in \sR^{p \times d}} h^2 d\mu_{s,R_s} \leq s \cdot C(R_s) \int_{\mW \in  \sR^{p \times d}} \norm{\nabla h}^2 \mu_{s,R_s} d\mu_{s,R_s} + \left(\int_{\mW \in  \sR^{p \times d}} h \,  d\mu_{s,R_s}\right)^2 \]

Then the key quantity $\reg_s$ occurring in the aforementioned convergence guarantee for SGD was shown to be, \[\reg_s = \frac{1 + 3s \left(\inf_{\mW \in  \sR^{p \times d}} V_s(\mW)\right) \epsilon(R_s)}{2(C(R_s) + 3\epsilon(R_s))}\]

%\subsection{Time Estimation}\label{sec:time_estimation}

%From above, it follows that we can write the error of running SGD at a constant step-size $s$ for $k = \frac{T}{s}$ iterations as,
%\begin{equation}\label{eq:bound_const}
%{\E}\tilde{L}(\mW^\frac{T}{s}) - \min_{\mW}\tilde{L}(\mW) \leq (A(\tilde{L}) + B(T,\tilde{L})) \cdot s + C(s,\tilde{L}) \norm{\rho - \mu_s}_{\mu_s^{-1}} e^{ - \reg_s \cdot T} 
%\end{equation}

%Hence one way to get the above error below any given $\epsilon >0$ is to choose the step-size as,
%\begin{equation}\label{eq:step_size}
%s = s^*(\epsilon,T) \coloneqq \min\left(\frac{1}{{\rm gLip}(\tilde{L})}, \frac{\epsilon}{2} \cdot \frac{1}{(A(\tilde{L}) + B(T,\tilde{L}))}\right)
%\end{equation}

%and initialize the weights by sampling from any distribution with p.d.f $\rho = \rho_{\rm initial} ~s.t$ 

%\begin{equation}\label{eq:initial_pdf}
%\norm{\rho_{\rm initial} - \mu_{s^*}}_{\mu_{s^*}^{-1}}  \leq \frac{\epsilon}{2} \cdot \frac{e^{ \reg_{s^*} \cdot T}} { C(s^*,\tilde{L}) } 
%\end{equation}

%We recall that in above the time horizon $T$ is a free parameter - which in the above two equations we see to be determining both the step-size $s^*$ and the $\rho_{\rm initial}$.  

%From above the accuracy guarantee as given in the statement of Theorem \ref{thm:sgd-sig-bce} follows for the constant step-size $s^*(\epsilon,T)$ SGD for $k = \frac{T}{s^* (\epsilon,T)}$ steps starting from weights sampled from the distribution $\rho_{\rm initial}$.  

\section{Further Details of The Key Idea in \citet{weijie_sde} About the Convergence of the SGD-SDE (Equation \ref{def:sde})}\label{sec:weije_addn}

%In the following section, we outline the proof of convergence of SGD and SDE: 

%\subsection*{Step I}

%\begin{theorem}[{\bf Proof in Appendix C.3 in \cite{weijie_sde}}] If $f$ is both confining and Villani and $h$ is s.t the required integrals are finite then $\exists \lambda_s >0$ s.t the measure $\mu_s$ satisfies a Poincare-type inequality as follows, 

%\[ \E_{\mu_s} [h^2] - \E_{\mu_s}^2 [h] \leq \frac{s}{2 \lambda_s} \cdot \E_{\mu_s} [\norm{\nabla h}^2]\]
%\end{theorem}

In this appendix we outline the steps of the key proof in \citet{weijie_sde} which prove that that solution $\mW_s(t)$ of the SGD-SDE (Equation \ref{def:sde}) has a well-controlled bound on the error it makes at any time $t$ in terms of how far away it is from the global infimum of the objective function $\tilde{L}$.

\begin{theorem}[{\bf (Main) Theorem 1 of \cite{weijie_sde}}]\label{thm:weijiemain}
If $\tilde{L}$ is both confining and Villani, $\exists \lambda_s >0$ for any $s>0$ (``learning rate'') s.t for a certain functions $\epsilon(s,\tilde{L}) >0$ which is strictly increasing in $s$ and $D(s,\mW,\rho) \geq 0$ where $\rho$ is the initial distribution, we have expotential convergence of the expected excess risk as, 

\[ \E \tilde{L}(\mW_s(t)) - \inf_\vz \tilde{L}(\vz) \leq \epsilon(s) + D(s,\mW,\rho) e^{-\lambda_s t}  \]
\end{theorem}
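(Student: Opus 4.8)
The plan is to split the expected excess risk at time $t$ into an \emph{equilibrium bias} term and a \emph{transient relaxation} term,
\[
\E\tilde{L}(\mW_s(t)) - \inf_{\vz}\tilde{L}(\vz) = \left(\E_{\mu_s}\tilde{L} - \inf_{\vz}\tilde{L}(\vz)\right) + \left(\E\tilde{L}(\mW_s(t)) - \E_{\mu_s}\tilde{L}\right),
\]
where $\mu_s$ is the Gibbs measure of Equation \ref{def:mu} and $\E_{\mu_s}$ denotes expectation under it. The first bracket depends on $s$ alone and will play the role of the nonnegative, strictly increasing $\epsilon(s)$; the second bracket measures how far the law $\rho_s(t)$ of $\mW_s(t)$ sits from equilibrium and will be shown to decay like $e^{-\lambda_s t}$. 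Since $\tilde{L}$ is confining, Lemma $5.2$ of \citet{weijie_sde} ensures that $\rho_s(t)$ is a genuine density solving the FPS equation \ref{def:fps} and that it converges to $\mu_s$, so every quantity below is well defined.

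For the transient term I would write it as an integral against the signed measure $\rho_s(t) - \mu_s$, which has zero total mass; this lets me subtract the constant $\E_{\mu_s}\tilde{L}$ for free and then apply Cauchy--Schwarz in $L^2(\mu_s)$:
\[
\left|\E\tilde{L}(\mW_s(t)) - \E_{\mu_s}\tilde{L}\right| = \left|\int (\tilde{L} - \E_{\mu_s}\tilde{L})\,(\rho_s(t) - \mu_s)\,d\mW\right| \leq C(s,\tilde{L})\,\norm{\rho_s(t) - \mu_s}_{\mu_s^{-1}},
\]
where $C(s,\tilde{L}) = \left(\int (\tilde{L} - \E_{\mu_s}\tilde{L})^2\,d\mu_s\right)^{1/2}$ is the standard deviation of $\tilde{L}$ under $\mu_s$ and the remaining factor is exactly the weighted $\normltwo$ (equivalently $\chi^2$) distance appearing in the statement.

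The core of the argument is the exponential decay of $\norm{\rho_s(t) - \mu_s}_{\mu_s^{-1}}$. Setting $g(\cdot,t) := \rho_s(\cdot,t)/\mu_s$, so that $\norm{\rho_s(t)-\mu_s}_{\mu_s^{-1}}^2 = \int (g-1)^2\,d\mu_s = \Var_{\mu_s}[g]$, I would rewrite the FPS flow \ref{def:fps} in the divergence form $\partial_t\rho_s = \nabla\cdot\left(\frac{s}{2}\mu_s\nabla g\right)$, differentiate in time, and integrate by parts. Because the dynamics is reversible with respect to $\mu_s$, this yields the entropy-dissipation identity
\[
\frac{d}{dt}\int (g-1)^2\,d\mu_s = -s\int \norm{\nabla g}^2\,d\mu_s.
\]
Applying the Poincar\'e-type inequality of Theorem \ref{def:lambdas} to $h = g-1$ (legitimate since $\E_{\mu_s}[g-1]=0$) gives $\int \norm{\nabla g}^2\,d\mu_s \geq \frac{2\lambda_s}{s}\int (g-1)^2\,d\mu_s$, whence $\frac{d}{dt}\int (g-1)^2\,d\mu_s \leq -2\lambda_s \int (g-1)^2\,d\mu_s$. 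Gr\"onwall's inequality then produces $\norm{\rho_s(t)-\mu_s}_{\mu_s^{-1}} \leq e^{-\lambda_s t}\,\norm{\rho - \mu_s}_{\mu_s^{-1}}$, so that setting $D(s,\mW,\rho) := C(s,\tilde{L})\,\norm{\rho - \mu_s}_{\mu_s^{-1}}$ bounds the transient term by $D(s,\mW,\rho)e^{-\lambda_s t}$, with $\lambda_s$ the Poincar\'e constant.

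It remains to establish the properties of $\epsilon(s) = \E_{\mu_s}\tilde{L} - \inf\tilde{L} \geq 0$. Writing $\beta := 2/s$ for the inverse temperature, so that $\mu_s \propto e^{-\beta \tilde{L}}$, a direct differentiation gives $\frac{d}{d\beta}\E_{\mu_s}\tilde{L} = -\Var_{\mu_s}[\tilde{L}] \leq 0$; since $\beta$ decreases in $s$ and $\tilde{L}$ is nonconstant, $\epsilon(s)$ is strictly increasing in $s$, while a Laplace-type concentration of $\mu_s$ onto the global minimizers as $s \downarrow 0$ forces $\epsilon(s)\to 0$. I expect the main obstacle to be the analytic justification underpinning two of these steps: rigorously discarding the boundary terms in the integration by parts that yields the dissipation identity, and controlling the large-$\norm{\mW}$ tails needed both for the finiteness of $C(s,\tilde{L})$ and for the Laplace asymptotics of $\epsilon(s)$. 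Both hinge on the growth built into the Villani condition (Definition \ref{def:villani}), namely that $-\Delta\tilde{L} + \frac{1}{s}\norm{\nabla\tilde{L}}^2 \to +\infty$, which forces $\mu_s$ and $\rho_s(t)$ to decay fast enough at infinity for all these manipulations to be valid; converting this growth into the requisite a priori tail and regularity estimates is the technical heart of the proof.
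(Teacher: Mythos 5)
Your proposal is correct and follows essentially the same route as the paper: the identical bias-plus-transient decomposition around $\E_{\mu_s}\tilde{L}$, with the transient term controlled by the Poincar\'e constant $\lambda_s$ and the bias term by the concentration of the Gibbs measure as $s \downarrow 0$ (the paper's Steps I--III simply cite Lemma 5.2 and Propositions 3.1 and 3.2 of \citet{weijie_sde} for exactly these two brackets). The only difference is depth, not direction: where the paper imports those propositions as black boxes, you correctly reconstruct their internals --- the divergence-form rewriting $\partial_t\rho_s = \nabla\cdot\left(\frac{s}{2}\mu_s\nabla g\right)$, the dissipation identity plus Poincar\'e--Gr\"onwall argument yielding the $e^{-\lambda_s t}$ decay in $\normltwo(\mu_s^{-1})$, the Cauchy--Schwarz step producing $C(s,\tilde{L})$ (your mean-centered variant is a slightly tighter constant than the $\min$-centered one in Appendix \ref{sec:constants}, but plays the same role), and the Gibbs-variance identity giving strict monotonicity of $\epsilon(s)$ --- while accurately flagging that the boundary terms and tail-integrability issues are precisely what the Villani growth condition is there to discharge.
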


Towards outlining the proof of the above we will denote $\E [\tilde{L}(\mW_s(\infty)) ] \coloneqq \E_{\mW \sim \mu_s} \tilde{L}(\mW)$, where $\mu_s$ is as given in equation \ref{def:mu} . This is justified by the following convergence theorem - which we count as the first of the $3$ main steps to be taken to prove Theorem \ref{thm:weijiemain}.

\subsection*{Step I}
\begin{theorem}[{\bf Lemma 5.2 in \cite{weijie_sde}}]\label{thm:conv_fps}
If $\tilde{L}$ satisfies the confining condition and if the initial distribution of the S.D.E is $\rho \in L^2(\frac{1}{\mu_s})$ then the unique solution $\rho_s(t) \in C^1([0,\infty),L^2(\frac{1}{\mu_s}))$ of the F.P.S differential equation (\ref{def:fps}) of the S.D.E, $\dd{\mW_s} = -\nabla \tilde{L}(\mW_s) \dd{t} + \sqrt{s}\dd{\W}$ converges in $L^2(\frac{1}{\mu_s})$ to the Gibbs' invariant distribution $\mu_s$. 
\end{theorem}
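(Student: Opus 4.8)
The plan is to pass to the relative density $u(t,\mW) := \rho_s(t,\mW)/\mu_s(\mW)$ and study its evolution in the Hilbert space $L^2(\mu_s)$. The key observation is that the target norm is exactly reproduced: since $\rho_s - \mu_s = \mu_s(u-1)$, we have $\norm{\rho_s(t) - \mu_s}_{\mu_s^{-1}}^2 = \int (\rho_s - \mu_s)^2 \mu_s^{-1}\, d\mW = \int (u-1)^2\, d\mu_s = \norm{u(t) - 1}_{L^2(\mu_s)}^2$, so it suffices to show $u(t) \to 1$ in $L^2(\mu_s)$. First I would substitute $\rho_s = \mu_s u$ into the FPS equation (\ref{def:fps}). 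Using $\nabla \mu_s = -\tfrac{2}{s}\mu_s \nabla \tilde{L}$, which is just the defining relation of $\mu_s$ in (\ref{def:mu}), a direct computation collapses the drift terms and yields the backward (Kolmogorov) form
\[
\partial_t u = \tfrac{s}{2}\Delta u - \langle \nabla \tilde{L}, \nabla u\rangle =: \gL u,
\]
where $\gL$ is precisely the infinitesimal generator of the diffusion (\ref{def:sde}).

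The crucial structural fact I would exploit is that $\gL$ is symmetric and non-positive with respect to $\mu_s$: integration by parts against $\mu_s$ gives the Dirichlet-form identity $\langle \gL v, w\rangle_{\mu_s} = -\tfrac{s}{2}\int \langle \nabla v,\nabla w\rangle\, d\mu_s$ for suitable $v,w$. The confining hypothesis (conditions 1--3 of Definition \ref{def:villani}) guarantees that $\mu_s$ is a genuine probability measure and that $\gL$ is essentially self-adjoint on a natural core, so that $P_t := e^{t\gL}$ is a strongly continuous, self-adjoint contraction semigroup on $L^2(\mu_s)$ with $u(t) = P_t u(0)$. Two immediate consequences follow. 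Mass is conserved, $\int u(t)\, d\mu_s = \int \rho_s(t)\, d\mW = 1$ for all $t$, since $\mu_s$ is stationary for the flow; and the energy decays monotonically,
\[
\frac{d}{dt}\norm{u(t)-1}_{L^2(\mu_s)}^2 = 2\langle \gL(u-1), u-1\rangle_{\mu_s} = -s\int \norm{\nabla u}^2\, d\mu_s \leq 0,
\]
where I have used $\gL 1 = 0$ and $\nabla(u-1) = \nabla u$.

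To upgrade monotonicity to actual convergence I would invoke the spectral theorem for the self-adjoint operator $-\gL \geq 0$. Writing $-\gL = \int_0^\infty \lambda\, dE_\lambda$, we have $P_t = \int_0^\infty e^{-t\lambda}\, dE_\lambda$, and since $e^{-t\lambda}\to \mathbf{1}_{\{\lambda=0\}}$ pointwise and boundedly as $t\to\infty$, dominated convergence in the spectral measure gives $P_t \to E_{\{0\}}$ strongly, i.e.\ $u(t)$ converges to the orthogonal projection of $u(0)$ onto $\ker \gL$. It remains to identify $\ker \gL$. From the Dirichlet form, $\gL v = 0$ forces $\int \norm{\nabla v}^2 d\mu_s = 0$, hence $\nabla v = 0$ $\mu_s$-a.e.; because $\tilde{L} \in C^\infty$ and $\mu_s$ has full support on $\sR^{p\times d}$, the diffusion is irreducible and $v$ must be constant. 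Thus the projection of $u(0)$ onto $\ker\gL$ equals its mean $\int u(0)\, d\mu_s = 1$, and we conclude $u(t)\to 1$ in $L^2(\mu_s)$, equivalently $\rho_s(t)\to\mu_s$ in $L^2(\mu_s^{-1})$.

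The main obstacle is the functional-analytic bookkeeping in the merely confining regime: because no Poincar\'e / spectral-gap inequality is assumed here (that would require the fourth Villani condition and would instead yield Theorem \ref{def:lambdas}), one cannot read off any exponential rate, and the convergence is purely qualitative, extracted from the spectral theorem rather than a Gr\"onwall argument. Making this rigorous requires (i) verifying essential self-adjointness of $\gL$ together with the invariance of $L^2(\mu_s)$ under $P_t$ without growth control on $\tilde{L}$ beyond confinement, and (ii) ruling out escape of mass to infinity, i.e.\ that $E_{\{0\}}$ really is the constants and that the trajectory stays in $C^1([0,\infty), L^2(\mu_s^{-1}))$. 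These are exactly the points where the confining conditions 2--3 of Definition \ref{def:villani} do the essential work, and this is where I would concentrate the technical effort.
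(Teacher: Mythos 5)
Your argument is correct in outline, but note first that the paper itself contains no proof of this statement to compare against: Theorem \ref{thm:conv_fps} is imported verbatim as Lemma 5.2 of \citet{weijie_sde}, and the appendix it sits in merely records it as Step I of that paper's proof outline. Measured against the cited source, then, your proof is sound but takes the unitarily-conjugate route. \citet{weijie_sde} work in the ``Schr\"odinger picture'': substituting $\psi = \rho_s/\sqrt{\mu_s}$ turns the FPS equation (\ref{def:fps}) into $\partial_t \psi = -\mathcal{H}\psi$ with $\mathcal{H} = -\tfrac{s}{2}\Delta + \bigl(\tfrac{1}{2s}\norm{\nabla\tilde{L}}^2 - \tfrac{1}{2}\Delta\tilde{L}\bigr)$ a Schr\"odinger operator on flat $L^2(d\mW)$, whose ground state is $\sqrt{\mu_s}$ with eigenvalue $0$; convergence follows from spectral theory of $\mathcal{H}$ plus ground-state non-degeneracy. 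Observe that this potential is, up to a factor of $2$, exactly the quantity in condition 4 of Definition \ref{def:villani} --- which is why that condition (potential diverging, hence spectral gap), and not mere confinement, is what buys a rate, in agreement with your closing remark. Your substitution $u = \rho_s/\mu_s$ is the image of theirs under the unitary map $L^2(\mu_s)\ni u \mapsto u\sqrt{\mu_s}\in L^2(d\mW)$, so the two arguments are mathematically equivalent: yours keeps the Markov structure transparent (contraction semigroup, mass conservation, kernel equal to the constants), while theirs permits quoting off-the-shelf facts on essential self-adjointness and ground states of Schr\"odinger operators --- which is precisely the technical debt you flag in your points (i) and (ii). Two smaller remarks on your write-up: the energy-decay display is dispensable, since the spectral-theorem step already gives $P_t \to E_{\{0\}}$ strongly and the limit is pinned down by the initial mass $\int u(0)\, d\mu_s = 1$ alone; and your identification of $E_{\{0\}}$ with projection onto the constants silently uses $\mathbf{1}\in L^2(\mu_s)$, i.e.\ finiteness of $Z_s$, which is exactly confining condition 3 of Definition \ref{def:villani} and deserves to be stated explicitly, since it is the only place that hypothesis enters your endgame.
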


Now we note the following decomposition of error that is considered in Theorem \ref{thm:weijiemain}.

\[\E \tilde{L}(\mW_s(t)) - \inf_\vz \tilde{L}(\vz) = \Big ( \E \tilde{L}(\mW_s(t)) -  \E [\tilde{L}(\mW_s(\infty)) ] \Big ) + \Big  ( \E [\tilde{L}(\mW_s(\infty)) ] - \inf_\vz \tilde{L}(\vz) \Big  ) \] 

Next we bound each of the 2 terms in the RHS above via the following two theorems. 

\subsection*{Step II}
\begin{theorem}[{\bf Proposition 3.1 of \cite{weijie_sde}}]
For $\tilde{L}$ being confining and Villani and for any $s >0$ $\exists \lambda_s >0$ s.t we have a function $C(s,\tilde{L}) >0$ (an increasing function in $s$) s.t $~\forall t \geq 0$  we have, 

\[ \abs{\E \tilde{L}(\mW_s(t)) - \E \tilde{L}(\mW_s(\infty))} \leq C(s,\tilde{L}) \cdot e^{-\lambda_s t} \cdot \norm{\rho - \mu_s}_{\frac{1}{\mu_s}}  \]
\end{theorem}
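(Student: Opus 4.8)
The plan is to reduce the claimed bound to two ingredients: a Cauchy--Schwarz step that isolates the constant $C(s,\tilde{L})$, and the exponential relaxation of the density $\rho_s(t)$ to the Gibbs measure $\mu_s$ in the weighted space $L^2(1/\mu_s)$, which is exactly where the Poincar\'e-type inequality of Theorem \ref{def:lambdas} enters. First I would write the difference of expectations as an integral against the signed density $\rho_s(t)-\mu_s$,
\[
\E\tilde{L}(\mW_s(t)) - \E\tilde{L}(\mW_s(\infty)) = \int \tilde{L}(\mW)\bigl(\rho_s(t,\mW) - \mu_s(\mW)\bigr)\dd{\mW},
\]
and observe that, since $\rho_s(t)$ and $\mu_s$ are both probability densities, $\int(\rho_s(t)-\mu_s)\dd{\mW}=0$. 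Hence I may subtract any constant from $\tilde{L}$; choosing $\min_{\mW}\tilde{L}$ and applying Cauchy--Schwarz in $L^2(\mu_s)$ yields
\[
\abs{\E\tilde{L}(\mW_s(t)) - \E\tilde{L}(\mW_s(\infty))} \leq \Bigl(\int (\tilde{L}-\min_{\mW}\tilde{L})^2\,d\mu_s\Bigr)^{1/2}\Bigl(\int \frac{(\rho_s(t)-\mu_s)^2}{\mu_s}\dd{\mW}\Bigr)^{1/2}.
\]
The first factor is precisely the constant $C(s,\tilde{L})$ defined in Appendix \ref{sec:constants}, and the second factor is $\norm{\rho_s(t)-\mu_s}_{\mu_s^{-1}}$.

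It then remains to establish the contraction $\norm{\rho_s(t)-\mu_s}_{\mu_s^{-1}} \leq e^{-\lambda_s t}\norm{\rho-\mu_s}_{\mu_s^{-1}}$. For this I would pass to the relative density $u_t \coloneqq \rho_s(t)/\mu_s$. Since $\mu_s$ is the reversible stationary measure of the diffusion (\ref{def:sde}), a direct computation shows that the Fokker--Planck evolution (\ref{def:fps}) for $\rho_s$ is equivalent to $\partial_t u_t = \gL u_t$, where $\gL f = -\langle \nabla\tilde{L}, \nabla f\rangle + \tfrac{s}{2}\Delta f$ is the generator, which is self-adjoint in $L^2(\mu_s)$ with Dirichlet form $\tfrac{s}{2}\int\norm{\nabla f}^2 d\mu_s$. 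Because $\int(u_t-1)\,d\mu_s = 0$, we have $\norm{u_t-1}_{L^2(\mu_s)}^2 = \Var_{\mu_s}[u_t]$, and differentiating in time using $\gL 1 = 0$ and an integration by parts gives the dissipation identity $\tfrac{d}{dt}\tfrac12\Var_{\mu_s}[u_t] = -\tfrac{s}{2}\int\norm{\nabla u_t}^2 d\mu_s$. The Poincar\'e-type inequality of Theorem \ref{def:lambdas} bounds $\Var_{\mu_s}[u_t] \leq \tfrac{s}{2\lambda_s}\int\norm{\nabla u_t}^2 d\mu_s$, so the right-hand side is at most $-\lambda_s\Var_{\mu_s}[u_t]$; Gr\"onwall's inequality then yields $\Var_{\mu_s}[u_t]\leq e^{-2\lambda_s t}\Var_{\mu_s}[u_0]$, which is the claimed contraction after taking square roots and using $\norm{u_t-1}_{L^2(\mu_s)} = \norm{\rho_s(t)-\mu_s}_{\mu_s^{-1}}$ with $u_0=\rho/\mu_s$. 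Chaining this with the Cauchy--Schwarz estimate above gives the stated inequality.

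The hard part will not be the formal computation but its functional-analytic justification. One must know a priori that the solution $\rho_s(t)$ exists and remains in $L^2(1/\mu_s)$ so that $u_t-1$ is an admissible test direction; this is supplied by Step I (Theorem \ref{thm:conv_fps}, Lemma 5.2 of \citet{weijie_sde}), which uses only the confining property. The integration by parts producing the dissipation identity must be shown to carry no boundary contribution at infinity, which again rests on the confining/Villani decay of $\mu_s$ (Definition \ref{def:villani}), and the Poincar\'e inequality --- stated for $h\in C_c^{\infty}$ --- must be extended to $u_t-1$ by a density argument in the form domain. Finally, one should check that $C(s,\tilde{L})$ is finite and increasing in $s$: finiteness holds because the weight $\exp(-2\tilde{L}/s)$ decays fast enough to integrate the factor $(\tilde{L}-\min_{\mW}\tilde{L})^2$, which grows at most quadratically thanks to the Frobenius regularizer, and the monotonicity follows from $\mu_s$ redistributing mass toward larger values of $\tilde{L}$ as $s$ increases.
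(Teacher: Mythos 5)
Your proof is correct, but it is worth being clear that the paper never proves this statement itself: it is imported verbatim as Proposition 3.1 of \citet{weijie_sde} and used as ``Step II'' of the proof outline in Appendix \ref{sec:weije_addn}, so your argument supplies a proof that the paper delegates entirely to the cited source. Your route is the standard functional-inequality one, and it meshes exactly with the ingredients the paper does state: the Cauchy--Schwarz step (legitimized by subtracting $\min_{\mW}\tilde{L}$, using that $\rho_s(t)-\mu_s$ has zero total mass) produces precisely the constant $C(s,\tilde{L}) = \left(\int (\tilde{L} - \min_{\mW}\tilde{L})^2 \, d\mu_s\right)^{1/2}$ recorded in Appendix \ref{sec:constants}, and the contraction step uses only the Poincar\'e-type inequality of Theorem \ref{def:lambdas} applied to the relative density $u_t = \rho_s(t)/\mu_s$, whose evolution under the self-adjoint generator, the dissipation identity, and Gr\"onwall yield the rate $e^{-\lambda_s t}$ in $\normltwo(1/\mu_s)$; your algebra here (the generator $\gL f = -\langle\nabla\tilde{L},\nabla f\rangle + \tfrac{s}{2}\Delta f$, the Dirichlet form $\tfrac{s}{2}\int \norm{\nabla f}^2 d\mu_s$, and the factor-of-two bookkeeping between $\Var$ decay at rate $2\lambda_s$ and norm decay at rate $\lambda_s$) all checks out. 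By contrast, \citet{weijie_sde} obtain the same exponential decay spectrally, via a ground-state transformation of the Fokker--Planck operator into a Schr\"odinger operator whose potential is confining exactly because of condition 4 of Definition \ref{def:villani}; the Poincar\'e inequality you invoke is the packaged consequence of that spectral gap, and the explicit formula for $\lambda_s$ in Appendix \ref{sec:constants} comes from that analysis. So your proof is the more elementary and self-contained derivation given what this paper provides, while the spectral route is what actually produces the constant $\lambda_s$ you use as a black box. Your closing caveats are the right ones, and for the monotonicity of $C(s,\tilde{L})$ in $s$ your heuristic can be made rigorous: write $C(s,\tilde{L})^2$ as a Gibbs average of $(\tilde{L}-\min_{\mW}\tilde{L})^2$ at inverse temperature $2/s$ and apply Chebyshev's association inequality to the pair of nondecreasing functions $z\mapsto z^2$ and $z\mapsto z$ of $\tilde{L}-\min_{\mW}\tilde{L}\geq 0$ to conclude the derivative in $s$ is nonnegative.
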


\subsection*{Step III}
\begin{theorem}[{\bf Proposition 3.2 of \cite{weijie_sde}}]The excess risk at stationarity $\epsilon(s)$ is s.t for all $S>0$ and $s \in (0,S]$, $\exists A(S,\tilde{L})$ s.t, 

\[ \epsilon(s) = \E [\tilde{L}(\mW_s(\infty)) ] - \tilde{L}^* \leq A(S,\tilde{L}) \cdot s\] 
\end{theorem}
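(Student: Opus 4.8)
The quantity to be bounded is $\epsilon(s) = \E_{\mW \sim \mu_s}[\tilde{L}(\mW)] - \tilde{L}^*$, where $\mu_s \propto \exp(-2\tilde{L}/s)$ and $\tilde{L}^* = \min_{\mW}\tilde{L}$. The plan is to exploit that the Gibbs measure $\mu_s$ concentrates on the global minimizers of $\tilde{L}$ as $s \to 0$, and to quantify the concentration rate as linear in $s$. First I would shift the potential: set $g(\mW) \coloneqq \tilde{L}(\mW) - \tilde{L}^* \geq 0$. Since the factor $e^{-2\tilde{L}^*/s}$ cancels in the normalization of $\mu_s$, we have $\mu_s \propto e^{-2g/s}$ and
\[ \epsilon(s) = \E_{\mu_s}[g] = \frac{N(s)}{D(s)}, \qquad N(s) \coloneqq \int_{\R^{pd}} g\, e^{-2g/s}\,d\mW, \quad D(s) \coloneqq \int_{\R^{pd}} e^{-2g/s}\,d\mW, \]
both integrals being finite because $\tilde{L}$ is confining (established in the proof of Theorem \ref{thm:error_bound}).

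The key step is a layer-cake bound on the numerator. Writing $g(\mW) = \int_0^\infty \mathbf{1}[v < g(\mW)]\,dv$ and using $e^{-2g/s} \le e^{-v/s}e^{-g/s}$ on the set $\{g > v\}$, I would obtain
\[ N(s) = \int_0^\infty \Big(\int_{\{g>v\}} e^{-2g/s}\,d\mW\Big)\,dv \le \Big(\int_{\R^{pd}} e^{-g/s}\,d\mW\Big)\int_0^\infty e^{-v/s}\,dv = s\int_{\R^{pd}} e^{-g/s}\,d\mW. \]
Dividing by $D(s)$ gives the clean estimate
\[ \epsilon(s) \le s\cdot r(s), \qquad r(s) \coloneqq \frac{\int_{\R^{pd}} e^{-g/s}\,d\mW}{\int_{\R^{pd}} e^{-2g/s}\,d\mW}. \]
It then remains to show that $A(S,\tilde{L}) \coloneqq \sup_{s\in(0,S]} r(s) < \infty$. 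Since $g \ge 0$ implies $e^{-g/s} \ge e^{-2g/s}$ pointwise, we have $r(s) \ge 1$, and $r$ is continuous on $(0,S]$ (the partition functions are smooth and positive in $s$), so the only danger is the limit $s \to 0^+$.

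The hard part will be controlling $r(s)$ as $s \to 0$, i.e. comparing the partition functions at inverse temperatures $1/s$ and $2/s$. Writing $Z(\beta) = \int_{\R^{pd}} e^{-\beta g}\,d\mW$ and $m(u) = \mathrm{Leb}(\{g \le u\})$, integration by parts gives $Z(\beta) = \beta\int_0^\infty e^{-\beta u} m(u)\,du$, so the asymptotics of $Z(\beta)$ as $\beta \to \infty$ are governed entirely by the volume growth of the sublevel sets near the minimum. Because $\tilde{L}$ is $C^\infty$ — indeed real-analytic for the SoftPlus, sigmoid, and tanh activations — and confining, $m(u)$ grows like $u^{pd/2}$ near $u = 0$ at a nondegenerate minimizer (and is controlled two-sidedly by a polynomial through a {\L}ojasiewicz-type estimate in general), whence Watson's lemma / Laplace's method yields $Z(\beta)/Z(2\beta) \to 2^{pd/2}$. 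Consequently $r(s)$ extends continuously to $s=0$, is bounded on $[0,S]$, and $A(S,\tilde{L}) = \sup_{s\in(0,S]} r(s) < \infty$, which completes the argument.

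The single delicate point to pin down rigorously is exactly this uniform bound on $r(s)$: it is where one must invoke the nondegeneracy (or at least the analyticity-driven volume control) of the global minimum set of the \emph{nonconvex} regularized loss $\tilde{L}$, since $f$ is nonlinear in $\mW$ and convexity is unavailable. Everything else in the argument — the shift by $\tilde{L}^*$, the layer-cake inequality, and the reduction to the ratio $r(s)$ — is elementary and requires only the confining property already in hand.
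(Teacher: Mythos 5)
The statement you are proving is not actually proved anywhere in this paper: it is quoted verbatim as Proposition~3.2 of \citet{weijie_sde} and used as a black box in the proof outline of Appendix~\ref{sec:weije_addn}, so your proposal has to be judged on its own merits rather than against an in-paper argument. On those merits, your elementary steps are correct: the shift to $g \coloneqq \tilde{L} - \tilde{L}^* \geq 0$, the layer-cake/Tonelli computation, and the resulting bound $\epsilon(s) \leq s\, r(s)$ with $r(s) = \int e^{-g/s}\,d\mW \big/ \int e^{-2g/s}\,d\mW$ all go through (finiteness of $\int e^{-g/s}\,d\mW$ follows from the Villani integrability condition applied at parameter $2s$).

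The genuine gap is exactly the step you defer to the end: $\sup_{s \in (0,S]} r(s) < \infty$. First, note this is not a harmless reduction: since $r(s) = \E_{\mu_s}\left[e^{g/s}\right]$, Jensen's inequality gives $r(s) \geq e^{\epsilon(s)/s}$, so uniform boundedness of $r$ is a statement at least as strong as the proposition itself -- the hard content has been relocated, not removed. Second, your justification of it does not hold up. Nondegeneracy of the global minimizers of this nonconvex loss is neither established in the paper nor available in general, and the fallback ``{\L}ojasiewicz-type estimate'' supplies only \emph{one-sided} control of the sublevel volume $m(u) \coloneqq \mathrm{Leb}\{g \leq u\}$: gradient-Lipschitzness forces a lower bound of the form $m(u) \geq c_1 u^{pd/2}$ (every minimizer carries a quadratic cap), while a {\L}ojasiewicz inequality $g \geq c\, \mathrm{dist}(\cdot, \argmin g)^{\alpha}$ yields an upper bound $m(u) \leq c_2 u^{\kappa/\alpha}$ whose exponent can be strictly smaller when $\alpha > 2$. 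With mismatched exponents $a > b$ one only gets $Z(\beta)/Z(2\beta) \lesssim \beta^{\,a-b}$, i.e.\ $r(s)$ may grow polynomially in $1/s$, and your estimate degrades to $\epsilon(s) \lesssim s^{1-(a-b)}$; this mismatch is precisely why elementary Gibbs excess-risk arguments (e.g.\ the ones in \citet{rrt}) carry an extra $\log(1/s)$ factor rather than achieving a clean $O(s)$. What actually delivers matching exponents, $m(u) = C u^{\theta}\abs{\log u}^{k}(1+o(1))$, for the real-analytic losses at hand (sigmoid, tanh, SoftPlus) is resolution-of-singularities asymptotics in the style of Watanabe's singular learning theory -- substantially heavier machinery than ``Watson's lemma / Laplace's method,'' whose hypotheses (analyticity of $\tilde{L}$, structure of the minimizing set) your proposal neither states nor verifies. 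As written, you have proved $\epsilon(s) \leq s\,r(s)$ with $r$ continuous on $(0,S]$, but the boundedness of $r$ near $s=0$ -- which \emph{is} the proposition -- remains unproven.
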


Combining the above two theorems (in Step II and Step III respectively) we are led to the key upperbound stated in Theorem \ref{thm:weijiemain}, with an appropriate definition of the $D$ function therein.

\section{Experimental Demonstration of Maintenance of Classification Accuracy on MNIST Dataset}\label{sec:MNIST_experiments}

For further illustration, we present experimental studies performing binary classification between pairs of digits from the MNIST dataset by training depth-$2$, sigmoid activated nets and regularized with the coefficient being at the threshold where the loss was proven to become a Villani function. 

We do the binary classification experiments on the digit pairs $(0,1)$ and $(2,7)$. In our experiments, the elements of the trainable weight matrix $\mW_0$ (of dimension $12 \times 784$) is initialized from the standard normal distribution and so is the fixed outer layer of dimensions $1 \times 12$ -- which was then rescaled by the largest data norm for the value of $\lambda_c$ to be as given in equation \ref{lambsigvil}.

\paragraph{Experiment on Digits 2 and 7}

\begin{figure}[htb!]
    \centering
    \includegraphics[scale=0.3]{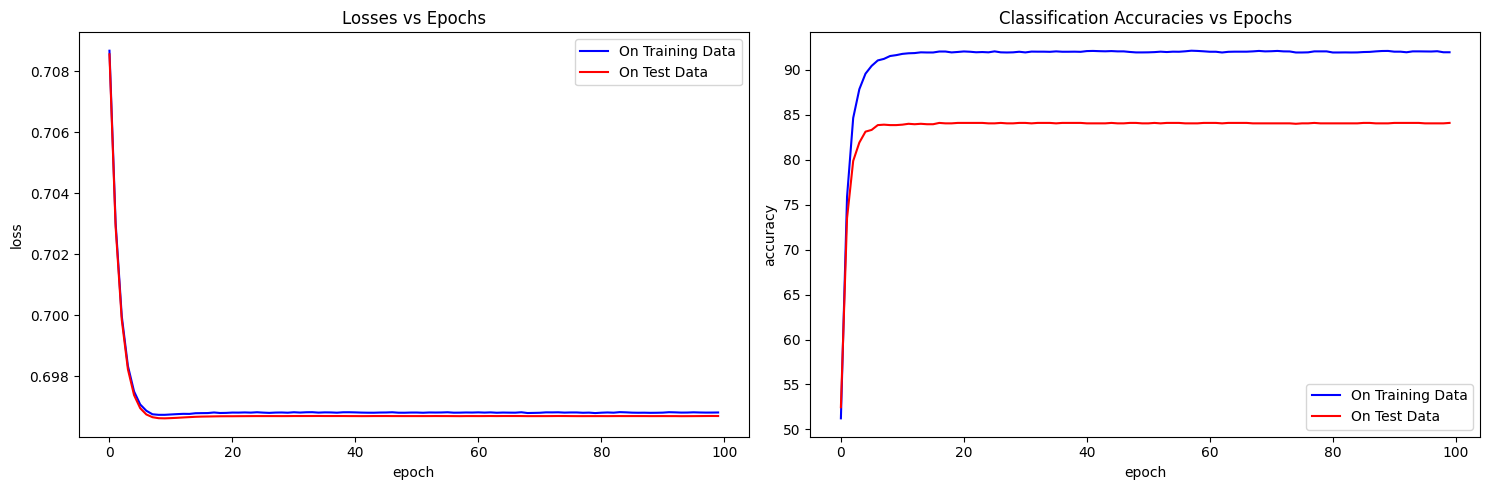}
    \caption{Batch Size = 3000, $\lambda = \lambda_c = 0.03125$ and the net being trained has $12$ sigmoid gates}
    \label{fig:2-7}
\end{figure}

This is the experiment shown in Figure \ref{fig:2-7}, In this case the model was trained for $100$ epochs with a mini-batch size of $3000$. We measured the test accuracy of classification - as the downstream metric of measuring the goodness of training and at the end of training, we achieved an accuracy of $\approx 84\%$.

\paragraph{Experiment on Digits 0 and 1}

\begin{figure}[htb!]
    \centering
    \includegraphics[scale=0.3]{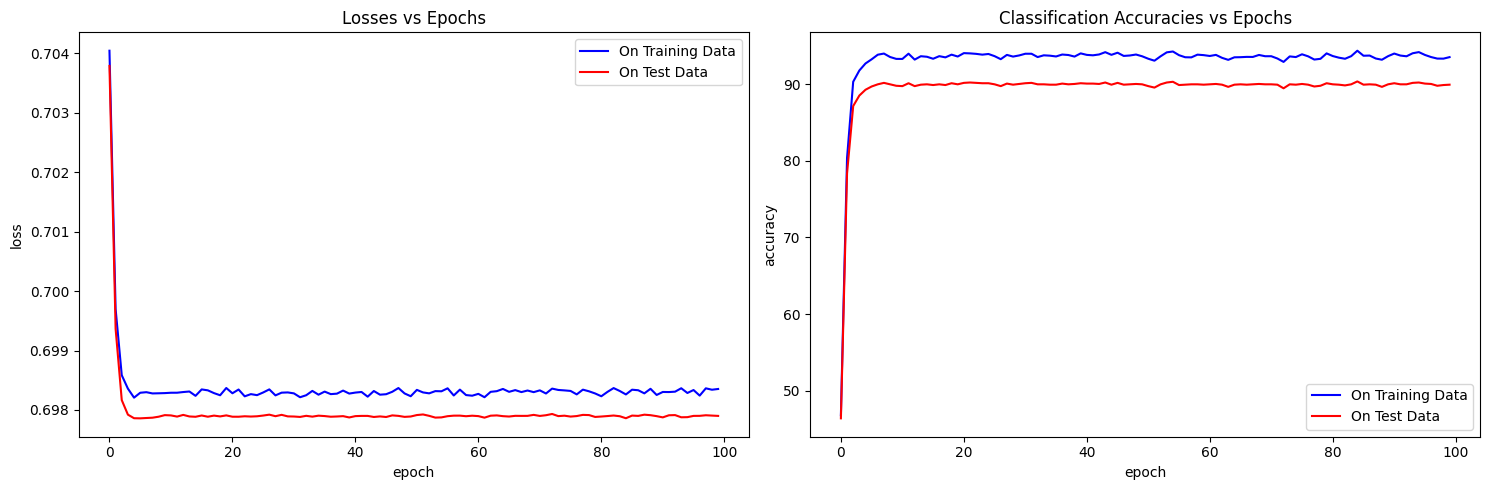}
    \caption{Batch Size = 3000, $\lambda = \lambda_c = 0.03125$ and the net being trained has $12$ sigmoid gates.}
    \label{fig:0-1}
\end{figure}1

This is the experiment shown in Figure \ref{fig:0-1}. Here the model was trained for $100$ epochs with a mini-batch size of $3000$. We measured the test accuracy of classification as the downstream metric of measuring the goodness of training and at the end of training, we achieved $\approx 90 \%$ accuracy

%Hence, we demonstrate an example of net rained on regularized logistic loss for regularization $\lambda = \lambda_c$ even though the model was not exposed to the $0-1$ Criteria during training, 

%\subsection{Conclusion}
Thus we have experimentally demonstrated that the threshold amount of regularization that was needed for the proof of convergence may not at all harm the downstream performance metric of classification for even real data. In both the cases above we demonstrated examples of nets being trained on logistic loss for regularization parameter being $\lambda = \lambda_c$ and achieving good accuracy on the classification metric even though the model was not exposed to the $0-1$ criteria during training.

Codes for these experiments can be found at \href{https://colab.research.google.com/drive/1R5SAUfM5xzSHfwI7yJ7o8d5LC2b0NxW7?usp=sharing}{this link for the $(2,7)$ classification experiment} and \href{https://colab.research.google.com/drive/1DaeUjOIQwrkOmvYvMpOZTbVI6tPE2x8a?usp=sharing}{at this link for the $(0,1)$ classification experiment}
\end{document}